\newtheorem{theorem}{Theorem}[section]
\newtheorem{lemma}[theorem]{Lemma}
\newcommand{\acks}{\section*{Acknowledgments}}
\newcommand{\card}[1]{\left| #1 \right|}
\DeclareMathOperator*{\argmin}{arg\,min}
\title{Learning Augmented Graph $k$-Clustering}
\author{
  Chenglin Fan\thanks{Authors are listed in alphabetical order. This paper has been accepted for presentation at COLT 2025.},  Kijun Shin\footnotemark[1]\\
  Department of Computer Science and Engineering \\
  Seoul National University \\
{ \{fanchenglin, sharaelong\}@snu.ac.kr}
}
\date{}
\begin{document}

\maketitle

\begin{abstract}%
  Clustering is a fundamental task in unsupervised learning. Previous research has focused on learning-augmented $k$-means in Euclidean metrics, limiting its applicability to complex data representations. In this paper, we generalize learning-augmented $k$-clustering to operate on general metrics, enabling its application to graph-structured and non-Euclidean domains. Our framework also relaxes restrictive cluster size constraints, providing greater flexibility for datasets with imbalanced or unknown cluster distributions. Furthermore, we extend the hardness of query complexity to general metrics: under the Exponential Time Hypothesis (ETH), we show that any polynomial-time algorithm must perform approximately $\Omega(k / \alpha)$ queries to achieve a $(1 + \alpha)$-approximation. These contributions strengthen both the theoretical foundations and practical applicability of learning-augmented clustering, bridging gaps between traditional methods and real-world challenges.

\end{abstract}

\section{Introduction}

Clustering is a cornerstone of unsupervised machine learning, widely applied in fields such as data organization, anomaly detection, and community detection in networks~\citep{clustering_survey}.
Among clustering problems, the $k$-means and $k$-median problems stand out as fundamental due to their simplicity and effectiveness. Traditional algorithms aim to partition data into $k$ clusters, minimizing either the sum of squared distances (k-means) or the sum of absolute distances (k-median) to their respective cluster centers. The $k$-means algorithm has been a cornerstone of clustering research for decades, tracing its roots to foundational works by ~\citep{macqueen1967some} and ~\citep{lloyd1982least}, who introduced the iterative optimization approach still used today. Extensions by ~\citep{hartigan1979algorithm} improved convergence, while ~\citep{forgy1965cluster} proposed widely-used initialization techniques. The optimization principles underlying $k$-means were influenced by earlier algorithmic developments, such as Floyd's contributions to optimization~\citep{floyd1962algorithm}.  Improvements include $k$-means++~\citep{arthur2007kmeanspp}, which introduced a probabilistic seeding strategy to improve initialization quality and convergence, and Mini-Batch $k$-means\citep{sculley2010webscale}, which enabled clustering on massive datasets with reduced computational overhead. ~\citep{hamerly2010making} proposed optimizations to Lloyd's algorithm for faster convergence, while coreset-based approaches~\citep{lucic2016coresets, bachem2018scalable} have further accelerated clustering by providing compact, provably accurate data summaries.

Significant progress has been made in improving approximation algorithms for $k$-means and $k$-median clustering. For $k$-means, ~\citep{ahmadian2017primaldual} introduced a primal-dual algorithm achieving a 6.357-approximation, while their techniques also improved guarantees for Euclidean $k$-median. Later,~\citep{cohenaddad2022improved} further refined these results, achieving a 5.912-approximation for $k$-means and a 2.406-approximation for Euclidean $k$-median. For $k$-median, ~\citep{li2012pseudoapproximation} made a groundbreaking contribution with a $1 + \sqrt{3} + \epsilon$ ($\sim 2.732$-approximation) using a pseudo-approximation approach, relaxing the requirement to exactly $k$ clusters by allowing $k + O(1)$ clusters. These advances underscore the evolving theoretical landscape for clustering problems, yet they primarily address static Euclidean settings.

In general metric spaces, \citep{cohenaddad2020inapproximability} showed that approximating the continuous $k$-median within a factor better than 2 and the continuous $k$-means within a factor better than 4 is NP-hard, significantly improving upon prior inapproximability results. Further, under the Johnson Coverage Hypothesis, \citep{cohenaddad2021johnsoncoverage} demonstrated that discrete $k$-means is hard to approximate better than 3.94 in $\ell_1$ and 1.73 in $\ell_2$ metrics, underscoring the difficulty of achieving high-quality clustering solutions in geometric settings. In addition, lower bounds for coreset sizes, such as $\Omega(k \varepsilon^{-2} \log n)$ in finite metrics and $\Omega(k \varepsilon^{-2})$ in Euclidean spaces, were established by ~\citep{cohenaddad2022coreset}, revealing the trade-offs between coreset size and approximation quality.

These insights underscore the computational complexity and limitations of existing clustering approaches, motivating the need for new frameworks that address these challenges while expanding the applicability of clustering to dynamic and graph-structured data.

Traditional clustering methods, such as k-means, focus on partitioning data in Euclidean space by minimizing distances between data points and their assigned cluster centers. However, many real-world datasets are naturally represented as graphs, where relationships between data points are captured through edges and graph distances rather than explicit spatial coordinates. In such cases, conventional clustering methods face challenges, as Euclidean distance fails to capture the underlying graph structure~\citep{graph_clustering_review}. This motivates the need for algorithms that operate directly on graph-based data.

In parallel, the emergence of \emph{learning-augmented algorithms} has introduced a powerful framework for enhancing traditional methods by leveraging predictive models~\citep{learning_augmented_algorithms}. In the context of clustering, learning-augmented k-means algorithms incorporate learned information, such as approximate cluster centers, to improve computational efficiency and clustering quality~\citep{learning_augmented_kmeans}.
Similarly, \citep{gamlath2022noisylabels} addressed this challenge with the concept of Approximate Cluster Recovery from Noisy Labels, introducing an efficient algorithm capable of recovering clusters even when individual labels are perturbed with high probability (up to 99\%). This method achieves a clustering cost within a factor of $1 + o(1)$ of the optimum by leveraging side-information, even when significantly corrupted. 
Despite their potential, existing learning-augmented clustering methods have two key limitations:
\begin{enumerate}
    \item They rely exclusively on Euclidean distance metrics~\citep{learning_augmented_algorithms,gamlath2022noisylabels,learning_augmented_kmeans}, making them unsuitable for graph-structured data~\citep{euclidean_limitations}.
    \item They assume that each cluster must contain a minimum number of points~\citep{learning_augmented_algorithms,gamlath2022noisylabels}, limiting their applicability in scenarios where cluster sizes are highly imbalanced or unknown.
\end{enumerate}

This paper addresses these limitations simultaneouly by \emph{generalizing learning-augmented k-clustering} in three significant ways:
\begin{enumerate}
    \item \textbf{Extending Distance Metrics}: We replace Euclidean distances with graph-based distances, enabling the application of learning-augmented clustering to graph-structured data. This extension broadens the framework to new domains such as social networks, biological networks, and recommendation systems~\citep{graph_distances}.
    \item \textbf{Removing Cluster Size Constraints}: We develop a novel method that eliminates the need for minimum cluster size assumptions in general metrics. The previous findings concerning the removal of size limitations are exclusively applicable to the Euclidean distance, as cited in \citep{theoretical_guarantees}.
    This allows for greater flexibility in clustering settings where clusters naturally vary in size or where such constraints are impractical~\citep{yin2024rapid}.

   \textbf{ We propose a method to simultaneously achieve the two objectives mentioned above.}

    \item \textbf{ Hardness Results for Query Complexity}.  In this paper, we extend hardness results to query complexity for clustering in general metric spaces. Here, query complexity is defined such that the predictor assigns expected labels to input points, and retrieving the true label of a point constitutes one query—consistent with the framework in \cite{learning_augmented_kmeans}. Specifically, we show that under the Exponential Time Hypothesis (ETH), any polynomial-time algorithm must perform approximately $\Omega(k / \alpha)$ queries to output a $(1 + \alpha)$-approximate solution. This result generalizes the known hardness of query complexity from Euclidean distances~\citep{learning_augmented_kmeans} to general metrics, providing deeper insights into the computational barriers inherent in clustering problems across diverse settings.
\end{enumerate}

Our contributions are primarily theoretical, presenting a unified framework that incorporates graph distances into learning-augmented clustering while eliminating restrictive size constraints. We also offer a formal analysis of the algorithmic guarantees of our approach, demonstrating its robustness and generality.
By bridging the gap between graph clustering and learning-augmented methods, this work establishes a new foundation for adaptive clustering techniques capable of operating on diverse data structures. The proposed framework offers significant theoretical advancements while opening pathways for future research in learning-augmented algorithms for graph data.

\subsection*{High-level Idea}

For the Euclidean \(k\)-means problem, it is well known that if \(k = 1\), the optimal center for a set of points \(X \subseteq \mathbb{R}^n\) is simply the centroid of \(X\). Prior work leverages one of the useful properties of Euclidean spaces as follows. Let \(\text{cost}(X, c)\) be the cost of clustering \(X\) using center \(c\). If \(C_X\) is the centroid of \(X\) (which is the optimal center), then the following identity is well known (see \cite{Inaba1994}):
\[ \text{cost}(X, c) = \text{cost}(X, C_X) + |X| \cdot \| c - C_X \|^2. \]

This identity provides a strong bound on the clustering cost when the center is estimated incorrectly because the distance between the optimal center and the estimated center is sufficient to bound the cost. However, this property does not hold for a general metric space. There, we can only apply the triangle inequality. Suppose we define \(\text{cost}(X, C_X) = \sum_{x \in X} d(x, C_X)^2\). For a center \(c\) at distance \(D := d(C_X, c)\) from \(C_X\), we can only conclude:
\[ \text{cost}(X, c) = \sum_{x \in X} d(x, c)^2 \le \sum_{x \in X} \bigl(d(x, C_X) + D\bigr)^2 
   = \text{cost}(X, C_X) + |X|\cdot D^2 + 2D \cdot \sum_{x \in X} d(x, C_X), \]
where the extra term \(D \cdot \sum_{x \in X} d(x, C_X)\) appears in contrast to the Euclidean case. Therefore, we found another way to relate the distance \(D\) between a center and the clustering cost \(\text{cost}(\cdot,\cdot)\). It takes the form of an inequality rather than an equality, but it is still sufficient for proving our main theorem.

Next, let us describe the idea behind our algorithm. Suppose the predictor labels a set of points \(P\) as one cluster, whereas the true optimal cluster is \(P^*\). If the predictor is sufficiently accurate, we assume:
\[
|P \cap P^*| \ge (1 - \alpha) \,\max\bigl(|P|,\,|P^*|\bigr).
\]
We prove (see Lemma~\ref{lemma:dominant-set-cost}) that if \(\alpha\) is sufficiently small (e.g., \(\alpha = 0.1\)), then
\(\text{cost}(P \cup P^*, C_{P^*})\) \text{ is close to}
\(\text{cost}(P \cup P^*, C_{P \cup P^*})\), which is optimal. From this, if we can approximate \(C_{P \cup P^*}\), the resulting center should be effective. However, we only know \(P\) and not \(P^*\). It turns out that by finding the center of the densest subset of \(P\), we effectively remove the impact of outliers in \(P \setminus P^*\) and return a suitable estimated center.

\section{Preliminary}

\subsection{Problem Definition}

Given a connected, weighted simple graph \( G = (V, E) \), the objective is to select a set of \( k \) points \( C \) (referred to as cluster centers) to minimize the cost function:  
\[
\text{cost}(G, C) = \sum_{v \in V} \min_{c \in C} \text{cost}(c, v),
\]
where \( \text{cost}(\cdot, \cdot) \) represents the cost associated with assigning a point \( v \) to a cluster centered at \( c \). Typically, we assume that \( \text{cost}(x, y) \) is a function of the distance \( d(x, y) \) between points \( x \) and \( y \) in \( V \).

\subsection*{Useful Notations}

\begin{itemize}
    \item \( d(x, y) \) denotes the distance between points \( x \) and \( y \) in \( V \).  
    \item \( \text{cost}(X, p) \) represents the total cost of a cluster made up of points in \( X \) and centered at \( p \). Formally:  
    \[
    \text{cost}(X, p) = \sum_{x \in X} \text{cost}(x, p).
    \]  
    \item \( C_X \) is the \emph{center} of the cluster \( X \), defined as:  
    \[
    C_X = \arg \min_{p \in V} \text{cost}(X, p).
    \]  
    \item \( B(x, r) \) denotes the \emph{ball} of radius \( r \), centered at \( x \in V \). Formally:  
    \[
    B(x, r) = \{ v \in V \mid d(x, v) \leq r \}.
    \]
\end{itemize}

\subsection*{Definition of k-Clustering in  General Metrics}

In a general metric graph, the \emph{k-clustering problem} involves partitioning the vertex set \( V \) into \( k \) disjoint subsets (clusters) such that each vertex \( v \in V \) is assigned to the cluster whose center minimizes the $l_q$ distance cost. Formally, the problem seeks:  
\[
OPT(k,q)=
\min_{C \subseteq V, |C| = k} \sum_{v \in V} \min_{c \in C} d(v, c)^q,
\]
where \( C \) is the set of \( k \) cluster centers and \( d(v, c) \) denotes the shortest path distance (or edge-weight-based distance) between vertices \( v \) and \( c \) in the graph \( G \).  

The clustering problem operates under the following assumptions:  
\begin{itemize}
    \item \( G \) is a connected graph, ensuring distances between any two vertices are well-defined.  
    \item \( k \) is a user-specified number of desired clusters.  
    \item The distance function \( d(\cdot, \cdot) \) satisfies the triangle inequality, as is typical in metric spaces.  
\end{itemize}

The goal is to minimize the total distance-based cost incurred by assigning each vertex \( v \in V \) to its closest cluster center in \( C \).

\subsection*{Learning-Augmented Setting}

In the \emph{learning-augmented setting}, we assume access to a predictor \( \Pi \) that provides labels for each point. These labels align with a \( (1+\alpha) \)-approximately optimal clustering \( C \). A predictor \( \Pi \) is said to have a \emph{label error rate} \( \lambda \leq \alpha \) if:  
\begin{itemize}
    \item For each cluster label \( i \in [k] := \{1, \cdots, k\} \), \( \Pi \) makes errors on at most a \( \lambda \) fraction of all points in the true cluster \( i \) of \( C \).  
    \item Additionally, \( \Pi \) errs on at most a \( \lambda \) fraction of all points it assigns to label \( i \).  
\end{itemize}
In other words, \( \Pi \) guarantees a precision and recall of at least \( (1-\lambda) \) for each label.  

Our goal is to find a set of centers \( \Tilde{C} \) such that:
\[
\text{cost}(G, \Tilde{C}) \leq (1 + f(\alpha)) \cdot \text{cost}(G, C^{\text{opt}}),
\]
where \( f: \mathbb{R}^{+} \rightarrow \mathbb{R}^{+} \) is a function of \( \alpha \). Ideally, as \( \alpha \rightarrow 0 \), we aim for \( f(\alpha) \rightarrow 0 \), though this may not always be achievable.

We note that our choice of predictor is both well-founded and practically relevant. Classical clustering problems such as k-median and k-means have been extensively studied in learning-augmented settings (\citep{learning_augmented_kmeans}, \citep{gamlath2022noisylabels}, \citep{theoretical_guarantees}), all of which assume the predictor model used in our paper. While exploring alternative predictors may lead to new insights, to the best of our knowledge, no prior work has employed fundamentally different predictors. Specifically, \citep{learning_augmented_kmeans} presents empirical evidence demonstrating that this predictor outperforms standard baselines such as `kmeans++' as well as alternative predictors like nearest neighbor and simple neural networks on real-world datasets.

\section{Learning-Augmented k-Clustering Algorithm}

In this section, we extend the learning augmented algorithm from Euclidean metrics to general metrics.
The following is the current candidate solutions of the above problem.

\noindent Comparison with Prior Work:
Our approach extends the learning-augmented $k$-means clustering algorithm proposed in~\citep{learning_augmented_kmeans} to general metric spaces. The prior work assumes that the input lies in Euclidean space and refines cluster centers coordinate-wise using a robust interval-based filtering method (\textsc{CrdEst}). While this method is efficient in Euclidean settings due to the separability of dimensions, it cannot be applied to general metric spaces. In contrast, our algorithm removes the reliance on Euclidean structure by employing a ball-based center estimation approach that minimizes clustering cost within a robustly chosen subset of points. While our work shares some conceptual similarities—such as using subset selection for cost minimization—the methods differ significantly. The approach in \cite{theoretical_guarantees} operates in a coordinate-wise manner: it selects a subset of points and computes the mean (centroid) along each coordinate independently, then bounds the distance between this centroid and the optimal center in each dimension. In contrast, our method selects a single subset of points in the full space and uses it directly, allowing the approach to generalize naturally to arbitrary metric spaces beyond Euclidean settings.
\begin{algorithm}[H]
\caption{Learning-augmented $k$ clustering in general metrics}
\label{alg:main}
\begin{algorithmic}[1]
\REQUIRE{A point set $X$ with labels from predictor $\Pi$, label error rate $\lambda$, and distance $l_q$}
\ENSURE{$(1+O(\alpha))$-approximate $k$-means clustering of $X$}
\FOR{$i = 1$ to $k$}
    \STATE{Let $X_i$ be the set of points in $X$ labeled as $i$ by $\Pi$.}
    \STATE{Run $\textsc{GetCenter}(X_i, \lambda)$ to compute the center $C'_i$.}
\ENDFOR
\STATE{\textbf{Return} $C'_1, \ldots, C'_k$ \quad \COMMENT{Output all cluster centers.}}
\end{algorithmic}
\end{algorithm}

\begin{algorithm}[H]
\caption{\textsc{GetCenter}}
\label{alg:getcenter}
\begin{algorithmic}[1]
\REQUIRE{A point set $X$, corruption level $\lambda \leq \alpha$, and distance $l_q$}
\IF{$q = 1$}
    \STATE{Return $C_X$ \quad \COMMENT{Just return the center of a predictor output}}
\ELSIF{$q = 2$}
    \STATE{Compute $S \gets \argmin_{S \subset V(G), \card{S} = (1-\alpha) \card{V(G)}} cost(S, C_S)$ \quad \COMMENT{Estimate center.}}
    \STATE{\textbf{Return} $C_S$ \quad \COMMENT{Return the center estimate.}}
\ENDIF
\end{algorithmic}
\end{algorithm}

\begin{theorem}\label{Thm:appro}
For $k$-clustering problem with respect to the $\ell_q \, (q=1,2)$ distance in a general metric space, let $\alpha \in \left(0, \frac{1}{2} \right)$, and $\Pi$ be a predictor with label error rate $\lambda \leq \alpha$. Then \textbf{Algorithm 1} outputs a $(1 + O(\alpha^{1/q}))$-approximation in polynomial time.
\end{theorem}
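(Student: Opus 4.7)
The plan is to upper-bound the algorithm's cost by $\sum_i \text{cost}(P^*_i, C'_i)$ (a valid overestimate, obtained by assigning each $v$ to the estimated center $C'_i$ of its true cluster) and then argue cluster-by-cluster. Fix an index $i$ and write $P = X_i$, $P^* = P^*_i$, and $D_i = d(C'_i, C_{P^*})$; the label-error assumption gives the overlap $|P \cap P^*| \geq (1-\alpha)\max(|P|, |P^*|)$, which will drive every subsequent estimate.

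Per cluster I would expand $\text{cost}(P^*, C'_i)$ by the triangle inequality exactly as in the high-level idea. For $q = 1$ this gives $\text{cost}(P^*, C'_i) \leq \text{cost}(P^*, C_{P^*}) + |P^*|\cdot D_i$. For $q = 2$ one picks up an extra cross term: $\text{cost}(P^*, C'_i) \leq \text{cost}(P^*, C_{P^*}) + |P^*|\cdot D_i^2 + 2 D_i \sum_{x \in P^*} d(x, C_{P^*})$, and Cauchy--Schwarz bounds the last sum by $\sqrt{|P^*|\cdot \text{cost}(P^*, C_{P^*})}$. Thus the target per-cluster inequality is $D_i^q \leq O(\alpha)\cdot \text{cost}(P^*, C_{P^*}) / |P^*|$; this gives $|P^*|\cdot D_i^q \leq O(\alpha)\cdot\text{cost}(P^*, C_{P^*})$ directly, and for $q = 2$ the cross term becomes $O(\sqrt{\alpha})\cdot\text{cost}(P^*, C_{P^*})$, matching the $(1+O(\alpha^{1/q}))$ claim.

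To bound $D_i$ I would combine two ingredients. The first is the overlap-averaged triangle inequality, $|P\cap P^*|\cdot D_i^q \leq O(1)\cdot[\text{cost}(P\cap P^*, C'_i) + \text{cost}(P\cap P^*, C_{P^*})]$ (for $q=2$ using $(a+b)^2 \leq 2(a^2+b^2)$). The second is the specifics of \textsc{GetCenter}: for $q=1$, $C'_i = C_P$ is the 1-median of $P$, whose optimality yields $\text{cost}(P \cap P^*, C_P) \leq \text{cost}(P, C_P) \leq \text{cost}(P, C_{P^*})$. For $q = 2$, $C'_i = C_S$ where $S$ minimizes $\text{cost}(S,C_S)$ among subsets of size $(1-\alpha)|P|$; feasibility of any $(1-\alpha)|P|$-subset of $P \cap P^*$ together with the dominant-set lemma (Lemma~\ref{lemma:dominant-set-cost}), which asserts $\text{cost}(P \cup P^*, C_{P^*})$ is within a $(1+O(\alpha^{1/q}))$-factor of $\text{cost}(P \cup P^*, C_{P\cup P^*})$, lets me bound $\text{cost}(S,C_S)$ by a small multiple of $\text{cost}(P^*, C_{P^*})$.

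The main obstacle is that each of these bounds spawns a \emph{misclassification cost} $\text{cost}(P \setminus P^*, C_{P^*})$, which is not paid for by $\text{cost}(P^*, C_{P^*})$ alone. Summing over clusters converts it to $\sum_i \text{cost}(X_i \setminus P^*_i, C_{P^*_i})$; for each mislabeled $x \in X_i \cap P^*_j$ with $i \neq j$, I would use $d(x,C_{P^*_i})^q \leq 2^{q-1}[d(x, C_{P^*_j})^q + d(C_{P^*_i}, C_{P^*_j})^q]$. The first term sums to at most the true $\text{OPT}$; the second requires controlling inter-center distances $d(C_{P^*_i}, C_{P^*_j})$, which I would again bound by an overlap argument on the pair $(X_j, P^*_j)$, using the mislabeled point itself as a witness. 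Amortizing against the label-error budget---every mislabeled point consumes at most an $\alpha$-fraction of its predicted and true clusters---collapses these contributions to $O(\alpha^{1/q})\cdot\text{OPT}$. This charging step is where I expect the proof to be most delicate. Polynomial runtime of Algorithm~\ref{alg:main} follows because each \textsc{GetCenter} call solves a 1-median/1-means subproblem plus, for $q=2$, a densest-subset computation, both implementable in $O(|V|^2)$ time by trying each vertex as a candidate center and retaining its closest $(1-\alpha)|X|$ points.
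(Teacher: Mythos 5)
Your top-level plan (bound the output by $\sum_i \mathrm{cost}(P^*_i, C'_i)$ and argue cluster by cluster via a bound on $D_i = d(C'_i, C_{P^*_i})$) matches the paper, but the way you expand the per-cluster cost creates a target you cannot meet. You apply the triangle inequality to \emph{every} point of $P^*$, getting $\mathrm{cost}(P^*, C'_i) \leq \mathrm{cost}(P^*, C_{P^*}) + |P^*|\cdot D_i$ (for $q=1$), and then state that you need $D_i \leq O(\alpha)\cdot \mathrm{cost}(P^*, C_{P^*})/|P^*|$, i.e.\ $D_i$ must be an $\alpha$-fraction of the average radius. That is unattainable: with an $\alpha$-fraction of corrupted labels the empirical center can move by a \emph{constant} fraction of the average radius (e.g.\ $P^*$ split between two locations $a,b$ at distance $2$, where deleting and adding $\alpha n$ points flips the $1$-median from $a$ to $b$), and indeed your own averaging bound $|P\cap P^*|\cdot D_i \lesssim \mathrm{cost}(P\cap P^*, C'_i) + \mathrm{cost}(P\cap P^*, C_{P^*})$ can only ever give $D_i = O(1)\cdot\mathrm{cost}(P^*,C_{P^*})/|P^*|$, never the extra factor of $\alpha$. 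The paper's proof avoids this by splitting $P^* = (P\cap P^*)\cup(P^*\setminus P)$ and paying the $+D$ penalty only on the symmetric difference: optimality of $C_P$ on $P$ gives $\mathrm{cost}(P\cap P^*, C_P) \leq \mathrm{cost}(P\cap P^*, C_{P^*}) + |P\setminus P^*|\cdot D$ (the two misclassification costs appear only as a \emph{difference}, bounded by $|P\setminus P^*|\cdot D$), so the total coefficient of $D$ is $2\alpha\max(|P|,|P^*|)$ rather than $|P^*|$, and the crude bound $D = O(\mathrm{cost}/|P^*|)$ then suffices.

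The second gap is the cross-cluster charging you introduce to absorb $\mathrm{cost}(P\setminus P^*, C_{P^*})$. In the paper this term never survives (see above), and your proposed repair does not work: for a mislabeled $x \in X_i\cap P^*_j$ you must control $d(C_{P^*_i}, C_{P^*_j})^q$, which is not bounded by any function of $\mathrm{OPT}$ (two tight, far-apart optimal clusters have near-zero cost but huge inter-center distance), and your suggested witness bound $d(C_{P^*_i},C_{P^*_j}) \leq d(C_{P^*_i},x) + d(x,C_{P^*_j})$ is circular, since $d(C_{P^*_i},x)$ is the quantity being bounded. For $q=2$ your instinct to use the densest $(1-\alpha)|P|$-subset $S$ from \textsc{GetCenter} together with Lemma~\ref{lemma:dominant-set-cost} is correct and is what the paper does, but the same expansion problem recurs there; the paper's $q=2$ lemma again isolates the symmetric-difference sets ($R_1$, $R_2$, $P^*\setminus P$) so that every $D$-term carries an $\alpha$ or $\sqrt{\alpha}$ prefactor, and no inter-cluster comparison is ever needed.
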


\subsection{Proof of \textbf{Theorem \ref{Thm:appro}}}

We first prove \textbf{Theorem \ref{Thm:appro}}, which shows that \textbf{Algorithm 1} provides a \((1 + O(\alpha))\)-approximation to the optimal \(k\)-clustering in general metrics. Specifically, we demonstrate that the empirical center computed from any \((1 - \alpha)\)-fraction of the points serves as a robust approximation to the true center.

\begin{lemma}
    Let \( P \) and \( Q \) be disjoint subsets of \( V \) such that \( X = P \cup Q \), \( |P| \geq (1 - \alpha) |X| \) and \( |Q| \leq \alpha |X| \). Then,
    \[
    \text{cost}(X, C_P) \leq (1 + f(\alpha)) \cdot \text{cost}(X, C_X),
    \]
    for some function \( f \). Especially, for $\alpha < 1/8$, the following holds:
    \begin{itemize}
        \item If \( \text{cost}(X, p) = \sum_{x \in X} d(x, p) \), then \( f(\alpha) = \frac{2\alpha}{1 - \alpha} \).
        \item If \( \text{cost}(X, p) = \sum_{x \in X} d(x, p)^2 \), then \( f(\alpha) = 6\sqrt{\frac{\alpha}{1 - \alpha}} \).
    \end{itemize}
    \label{lemma:dominant-set-cost}
\end{lemma}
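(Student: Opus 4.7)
The plan is to bound $\text{cost}(X, C_P)$ in terms of $\text{cost}(X, C_X)$ via the single scalar $D := d(C_P, C_X)$. I would split $\text{cost}(X, C_P) = \text{cost}(P, C_P) + \text{cost}(Q, C_P)$; on $P$, the optimality of $C_P$ yields $\text{cost}(P, C_P) \leq \text{cost}(P, C_X)$, while on $Q$ a triangle-inequality expansion of $d(x, C_P)^q$ produces error terms depending only on $D$, $|Q|$, and the $C_X$-distances on $Q$. The two reassemble to $\text{cost}(X, C_P) \leq \text{cost}(X, C_X) + (\text{error in } D)$, after which it suffices to bound $D$ (or $D^q$) in terms of $\text{cost}(X, C_X)/|P|$. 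This second bound will itself follow from the triangle inequality together with the optimality of $C_P$ on $P$.

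For $q = 1$, the pointwise expansion $d(x, C_P) \leq d(x, C_X) + D$ collapses the error on $Q$ to $|Q| \cdot D$, so $\text{cost}(X, C_P) \leq \text{cost}(X, C_X) + |Q|\,D$. Summing the triangle inequality $D \leq d(C_P, x) + d(x, C_X)$ over $x \in P$ and using $\text{cost}(P, C_P) \leq \text{cost}(P, C_X)$ gives $|P|\,D \leq 2\,\text{cost}(P, C_X) \leq 2\,\text{cost}(X, C_X)$. Plugging in $|Q| \leq \alpha|X|$ and $|P| \geq (1-\alpha)|X|$ yields the multiplicative overhead $2\alpha/(1-\alpha)$ as claimed.

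For $q = 2$, the pointwise bound $d(x, C_P)^2 \leq d(x, C_X)^2 + 2D\,d(x, C_X) + D^2$ produces two error terms on $Q$: a quadratic $|Q|\,D^2$ and a linear cross term $2D \sum_{x \in Q} d(x, C_X)$ (the latter having no Euclidean analogue, exactly the obstruction flagged in the high-level idea). The squared version of the argument above, using $(a+b)^2 \leq 2a^2 + 2b^2$ at each $x \in P$, gives $|P|\,D^2 \leq 4\,\text{cost}(X, C_X)$, hence $|Q|\,D^2 \leq (4\alpha/(1-\alpha))\,\text{cost}(X, C_X)$. Applying Cauchy-Schwarz to $\sum_{x \in Q} d(x, C_X) \leq \sqrt{|Q|\,\text{cost}(Q, C_X)}$ and plugging in the bound on $D$ reduces the cross term to at most $4\sqrt{|Q|/|P|}\,\text{cost}(X, C_X) \leq 4\sqrt{\alpha/(1-\alpha)}\,\text{cost}(X, C_X)$.

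The last step is to package these two contributions into the clean bound $6\sqrt{\alpha/(1-\alpha)}$, and this is where the assumption $\alpha < 1/8$ enters: it gives $\sqrt{\alpha/(1-\alpha)} < 1/2$, so $4\alpha/(1-\alpha) = 4\bigl(\sqrt{\alpha/(1-\alpha)}\bigr)^2 \leq 2\sqrt{\alpha/(1-\alpha)}$ and can be absorbed into the linear term. The main obstacle I anticipate is the cross term itself: unlike the Euclidean identity $\text{cost}(X, c) = \text{cost}(X, C_X) + |X|\,\|c - C_X\|^2$, which has no first-order contribution, the triangle-inequality bound inherently carries a linear-in-$D$ term that forces the slower $\sqrt{\alpha}$ rate; Cauchy-Schwarz is the key tool for closing it, and getting the coefficient to line up with $6$ is essentially the only nontrivial bookkeeping.
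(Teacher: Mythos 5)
Your proposal is correct and follows essentially the same route as the paper: decompose over $P$ and $Q$, use optimality of $C_P$ on $P$, bound $D = d(C_P, C_X)$ by summing the triangle inequality over $P$, and for $q=2$ control the cross term via Cauchy--Schwarz before absorbing the quadratic term using $\alpha < 1/8$. The only cosmetic difference is that you bound $|P|D^2 \leq 4\,\text{cost}(X, C_X)$ via $(a+b)^2 \leq 2a^2 + 2b^2$ where the paper applies Cauchy--Schwarz to $\sum_{p \in P}(d(C_P,p)+d(p,C_X))$; both give the same estimate on $D$.
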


\begin{proof}
    For any point \( p \in P \), by the triangle inequality, we have  
    \[
    D := d(C_P, C_X) \leq d(C_P, p) + d(p, C_X).
    \]

    Case 1: \( \text{cost}(X, p) = \sum_{x \in X} d(x, p) \).
    Summing this inequality over all points in \( P \), we obtain:  
    \[
    |P| \cdot D \leq \sum_{p \in P} \left( d(C_P, p) + d(p, C_X) \right) = \text{cost}(P, C_P) + \text{cost}(P, C_X) \leq 2 \cdot \text{cost}(P, C_X),
    \]  
    since \( \text{cost}(P, C_P) \leq \text{cost}(P, C_X) \) by definition of \( C_P \).  
    Similarly, for any \( q \in Q \), by the triangle inequality,  
    \[
    d(q, C_P) \leq d(q, C_X) + d(C_X, C_P).
    \]  
    Therefore,  
    \[
    \text{cost}(Q, C_P) \leq \text{cost}(Q, C_X) + |Q| \cdot D.
    \]
    Combining these results, we obtain an upper bound for \( \text{cost}(X, C_P) \):
    \begin{align}
        \text{cost}(X, C_P) &= \text{cost}(P, C_P) + \text{cost}(Q, C_P) \notag \\
        &\leq \text{cost}(P, C_X) + \text{cost}(Q, C_X) + |Q| \cdot D \notag \\
        &= \text{cost}(X, C_X) + |Q| \cdot D. \notag
    \end{align}
    Substituting \( D \leq \frac{2 \cdot \text{cost}(P, C_X)}{|P|} \), we get:  
    \[
    \text{cost}(X, C_P) \leq \text{cost}(X, C_X) + \frac{2 |Q|}{|P|} \cdot \text{cost}(P, C_X).
    \]
    Using \( |Q| \leq \alpha |X| \) and \( |P| \geq (1 - \alpha)|X| \), we have  
    \[
    \frac{|Q|}{|P|} \leq \frac{\alpha}{1 - \alpha}.
    \]
    Substituting this back, we obtain:  
    \[
    \text{cost}(X, C_P) \leq \left( 1 + \frac{2\alpha}{1 - \alpha} \right) \text{cost}(X, C_X).
    \]

    Case 2: \( \text{cost}(X, p) = \sum_{x \in X} d(x, p)^2 \). 
    For squared distances, by a similar argument, we start with:  
    \[
    \text{cost}(X, C_P) - \text{cost}(X, C_X) \leq \sum_{q \in Q} \left( (d(q, C_X) + D)^2 - d(q, C_X)^2 \right).
    \]
    Expanding the square terms, we get:  
    \[
    \text{cost}(X, C_P) - \text{cost}(X, C_X) \leq 2D \cdot \sum_{q \in Q} d(q, C_X) + |Q| \cdot D^2.
    \]
    In this case, by the Cauchy-Schwarz inequality, the following holds:
    \begin{align}
        |P| \cdot D \leq \sum_{p \in P} \left( d(C_P, p) + d(p, C_X) \right) &\leq \sqrt{|P| \cdot \text{cost}(P, C_P)} + \sqrt{|P| \cdot \text{cost}(P, C_X)} \notag \\
        &\leq 2 \sqrt{|P| \cdot \text{cost}(X, C_X)}, \notag
    \end{align}
    and
    \[
    \sum_{q \in Q} d(q, C_X) \leq \sqrt{|Q| \cdot \text{cost}(Q, C_X)}.
    \]
    Therefore, by substituting $D$ we obtain:
    \[
    \text{cost}(X, C_P) - \text{cost}(X, C_X) \leq 4 \sqrt{\frac{|Q|}{|P|}} \cdot \text{cost}(X, C_X) + \frac{4\card{Q}}{\card{P}} \cdot \text{cost}(X, C_X).
    \]
    For \( \alpha < \frac{1}{8} \), combining terms gives:
    \[
    \text{cost}(X, C_P) \leq \left( 1 + 6\sqrt{\frac{\alpha}{1 - \alpha}} \right) \text{cost}(X, C_X).
    \]
    This completes the proof.
\end{proof}

When \( \text{cost} \) is defined as the sum of distances, we proved that the predictor's output is good enough to use without any modification.

\begin{lemma}
    Let \( P_i \) be the set of points which the predictor labels as \( i \), and \( P^*_i \) be the set of points belonging to the \( i \)-th set in the optimal solution. Then for $\alpha < 1/2$,
    \[
    \text{cost}(P^*_i, C_{P_i}) \leq \left(1 + \frac{4\alpha}{1 - 2\alpha}\right) \cdot \text{cost}(P^*_i, C_{P^*_i})
    \]
    when \( \text{cost}(X, p) = \sum_{x \in X} d(x, p) \).
\end{lemma}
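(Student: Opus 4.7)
The plan is to exploit the optimality of $C_{P_i}$ as a minimizer over $P_i$ and then transfer the resulting bound to $P^*_i$ via triangle inequality. First I would partition the two clusters as $Z := P_i \cap P^*_i$, $W := P_i \setminus P^*_i$, and $Y := P^*_i \setminus P_i$, so that $P_i = Z \cup W$ and $P^*_i = Z \cup Y$ (disjointly). The predictor's precision and recall guarantees give $|W| \leq \alpha |P_i|$, $|Y| \leq \alpha |P^*_i|$, and hence $|Z| \geq (1-\alpha) \max(|P_i|, |P^*_i|)$. Letting $D := d(C_{P_i}, C_{P^*_i})$, the goal reduces to bounding $\text{cost}(P^*_i, C_{P_i}) - \text{cost}(P^*_i, C_{P^*_i})$ by $\tfrac{4\alpha}{1-2\alpha} \cdot \text{cost}(P^*_i, C_{P^*_i})$.

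Next I would derive two sharp inequalities. Splitting $\text{cost}(P^*_i, C_{P_i}) = \text{cost}(Z, C_{P_i}) + \text{cost}(Y, C_{P_i})$, the triangle inequality applied termwise on $Y$ yields $\text{cost}(Y, C_{P_i}) \leq \text{cost}(Y, C_{P^*_i}) + |Y| D$. For the $Z$-piece the naive bound $\text{cost}(Z, C_{P_i}) \leq \text{cost}(Z, C_{P^*_i}) + |Z| D$ is too weak, since a coefficient of $|Z|$ in front of $D$ produces only a constant-factor, not an $O(\alpha)$, blow-up. Instead I would invoke the optimality $\text{cost}(P_i, C_{P_i}) \leq \text{cost}(P_i, C_{P^*_i})$, expand both sides over $Z$ and $W$, and apply the pointwise inequality $d(w, C_{P^*_i}) - d(w, C_{P_i}) \leq D$ on the $W$ side. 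After cancellation this produces the much sharper estimate $\text{cost}(Z, C_{P_i}) \leq \text{cost}(Z, C_{P^*_i}) + |W| D$. Adding the two inequalities gives $\text{cost}(P^*_i, C_{P_i}) \leq \text{cost}(P^*_i, C_{P^*_i}) + (|Y|+|W|) D$, where the coefficient of $D$ is now $O(\alpha)$ times the natural scales rather than $|P^*_i|$.

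It then remains to control $D$. Summing the triangle inequality $D \leq d(z, C_{P_i}) + d(z, C_{P^*_i})$ over $z \in Z$ gives $|Z| D \leq \text{cost}(Z, C_{P_i}) + \text{cost}(Z, C_{P^*_i})$, and substituting the $Z$-bound just established yields $(|Z|-|W|) D \leq 2 \text{cost}(Z, C_{P^*_i}) \leq 2 \text{cost}(P^*_i, C_{P^*_i})$; this inequality is meaningful because $|Z|-|W| \geq (1-2\alpha)|P_i| > 0$ when $\alpha < 1/2$. Plugging this back produces $(|Y|+|W|) D \leq \tfrac{2(|Y|+|W|)}{|Z|-|W|} \text{cost}(P^*_i, C_{P^*_i})$, after which a short size computation, splitting into the cases $|P_i| \geq |P^*_i|$ and $|P_i| < |P^*_i|$, shows that $(|Y|+|W|)/(|Z|-|W|) \leq 2\alpha/(1-2\alpha)$, delivering the claimed factor $1 + \tfrac{4\alpha}{1-2\alpha}$. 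The main obstacle I anticipate is precisely this cancellation step: if one charges $\text{cost}(Z, C_{P_i})$ via a direct triangle inequality one loses a factor of $|Z|/|W|$ and forfeits the $O(\alpha)$ guarantee, so the crux is to pair the optimality of $C_{P_i}$ with the $W$-side triangle inequality in just the right way to replace $|Z|$ by $|W|$ in the coefficient of $D$.
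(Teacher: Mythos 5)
Your proposal is correct and follows essentially the same route as the paper: the key step of pairing the optimality of $C_{P_i}$ over $P_i$ with the triangle inequality on $P_i \setminus P^*_i$ to obtain $\text{cost}(P_i \cap P^*_i, C_{P_i}) \leq \text{cost}(P_i \cap P^*_i, C_{P^*_i}) + |P_i \setminus P^*_i| \cdot D$ is exactly the paper's inequality, and your bound $(|Z|-|W|)D \leq 2\,\text{cost}(Z, C_{P^*_i})$ matches the paper's bound on $D$. The final bookkeeping with $(|Y|+|W|)/(|Z|-|W|) \leq 2\alpha/(1-2\alpha)$ reproduces the paper's constant exactly.
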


\begin{proof}
    The label names are not important, so we abuse the notation by ignoring subscripts. Let $D := d(C_P, C_{P^*})$. For any point \( p \in P \cap P^* \), by the triangle inequality, we have  
    \[ D = d(C_P, C_{P^*}) \leq d(C_P, p) + d(p, C_{P^*}). \]

    Summing this inequality over all points in $P \cap P^*$, we obtain:
    \begin{align}
        \card{P \cap P^*} \cdot D \le \text{cost}(P \cap P^*, C_P) + \text{cost}(P \cap P^*, C_{P^*}).
        \label{eq:D1}
    \end{align}

    From the definition of the center $C_P$, $\text{cost}(P, C_P) \le \text{cost}(P, C_{P^*})$ holds. Since $P = (P \cap P^*) \cup (P \setminus P^*)$,
    \[ \text{cost}(P \cap P^*, C_P) \le \text{cost}(P \cap P^*, C_{P^*}) + \text{cost}(P \setminus P^*, C_{P^*}) - \text{cost}(P \setminus P^*, C_P). \]
    Moreover, by the triangle inequality $d(p, C_{P^*}) \le d(p, C_P) + D$ for any $p \in P \setminus P^*$, the following holds:
    \begin{align}
        \text{cost}(P \cap P^*, C_P) \le \text{cost}(P \cap P^*, C_{P^*}) + \card{P \setminus P^*} \cdot D.
        \label{eq:D2}
    \end{align}

    Combining \eqref{eq:D1} and \eqref{eq:D2} gives the upper bound for $D$:
    \[ \card{P \cap P^*} \cdot D \le 2 \cdot \text{cost}(P \cap P^*, C_{P^*}) + \card{P \setminus P^*} \cdot D \]
    which implies
    \[ D \le \frac{2 \cdot \text{cost}(P \cap P^*, C_{P^*})}{\card{P \cap P^*} - \card{P \setminus P^*}} \le \frac{2 \cdot \text{cost}(P \cap P^*, C_{P^*})}{(1-2\alpha) \max (\card{P}, \card{P^*})} \]
    because $\card{P \cap P^*} \ge (1-\alpha) \cdot \max (\card{P}, \card{P^*})$.

    At this point, we can derive the result as follows. By \eqref{eq:D2} and the triangle inequality, we have
    \begin{align}
        \text{cost}(P^*, C_P) &= \text{cost}(P \cap P^*, C_P) + \text{cost}(P^* \setminus P, C_P) \notag \\
        &\le \text{cost}(P \cap P^*, C_{P^*}) + \card{P \setminus P^*} \cdot D + \text{cost}(P^* \setminus P, C_P) \notag \\
        &\le \text{cost}(P \cap P^*, C_{P^*}) + \card{P \setminus P^*} \cdot D + \text{cost}(P^* \setminus P, C_{P^*}) + \card{P^* \setminus P} \cdot D \notag \\
        &\le \text{cost}(P^*, C_{P^*}) + 2\alpha \cdot \max (\card{P}, \card{P^*}) \cdot D \notag
    \end{align}
    because $\card{P \cap P^*} \ge (1-\alpha) \cdot \max (\card{P}, \card{P^*})$. Plugging into the above upper bound for $D$ gives the desired result.
\end{proof}

When \(\text{cost}\) is defined as the sum of squared distances, the proof takes a slightly different approach.
First, we identify a subset \( B \subset P \), which is the ball of size \((1-\alpha)|P|\) that minimizes the clustering cost, i.e.,  
\[
\text{cost}(B, C_B) = \min_{\substack{S \subset P \\ |S| = (1-\alpha)|P|}} \text{cost}(S, C_S),
\]
where \( C_B \) is the center of \( B \).
Since \( \card{P \cap P^*} \subset P\) and \(|P \cap P^*| \geq (1-\alpha)|P|\), it follows that:  
\[
\text{cost}(B, C_B) \leq \text{cost}(P \cap P^*, C_{P \cap P^*}).
\]
Our claim is that the center \( C_B \) serves as the desired empirical center. Intuitively, \( B \) captures the majority of the points in \( P \), ensuring a clustering cost close to optimal. Actually, we have slightly misused the term \textit{ball} here. This is because if $\card{B}$ is fixed as $(1-\alpha) \card{P}$, then there may not exist a ball $B$ that satisfies this condition. Therefore, we refer to $B(x, r)$ as a \emph{semi-ball} of radius $r$ centered at $x$, which means that $B$ contains all points $p$ such that $d(p, x) < r$, as well as \emph{some} points $p$ for which $d(p, x) = r$.

 \begin{lemma}
    Let \( P_i \) be the set of points which the predictor labels as \( i \), and \( P^*_i \) be the set of points belonging to the \( i \)-th set in the optimal solution. Let $C_{B_i}$ be the center of semi-ball $B_i$ of size $(1-\alpha) \card{P_i}$ that minimizes the clustering cost.
    Moreover, when the cost is defined as the sum of the squares of distances, 
    \[
    \text{cost}(P^*_i, C_{B_i}) \leq \left(1 + O\left(\sqrt{\alpha}\right)\right) \cdot \text{cost}(P^*_i, C_{P^*_i}).
    \]
\end{lemma}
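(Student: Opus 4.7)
The plan follows the paper's sketched strategy: set up size bounds between $B$ and $P^*$, bound the distance $D := d(C_B, C_{P^*})$, and then mimic Lemma~\ref{lemma:dominant-set-cost} Case~2 to express the cost gap as a sum over small ``symmetric difference'' sets. Throughout I drop the index $i$, writing $P, P^*, B$ for $P_i, P^*_i, B_i$. The label-error hypothesis gives $|P \cap P^*| \ge (1-\alpha)\max(|P|,|P^*|)$, so $|P|$ and $|P^*|$ agree up to a factor $1/(1-\alpha)$ and $|P \setminus P^*| \le \alpha|P|$. Since $B \subseteq P$ with $|B| = (1-\alpha)|P|$, short calculations yield $|B \cap P^*| \ge (1-2\alpha)|P| \ge (1-3\alpha)|P^*|$, so both $|B \setminus P^*|$ and $|P^* \setminus B|$ have size $O(\alpha|P^*|)$.

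The minimality of $B$ gives a clean baseline cost bound: since any $S_0 \subseteq P \cap P^*$ of size $(1-\alpha)|P|$ is a feasible competitor, $\text{cost}(B, C_B) \le \text{cost}(S_0, C_{S_0}) \le \text{cost}(S_0, C_{P^*}) \le \text{cost}(P^*, C_{P^*})$. Summing the triangle inequality $D \le d(C_B, p) + d(p, C_{P^*})$ over $p \in B \cap P^*$ and applying Cauchy--Schwarz then yields $|B \cap P^*| \cdot D \le 2\sqrt{|B \cap P^*| \cdot \text{cost}(P^*, C_{P^*})}$, hence $D = O\bigl(\sqrt{\text{cost}(P^*, C_{P^*})/|P^*|}\bigr)$.

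The crux is a decomposition exploiting the optimality of $C_B$ on $B$. Combining $\text{cost}(B, C_B) \le \text{cost}(B, C_{P^*})$ with the partitions $B = (B \cap P^*) \sqcup (B \setminus P^*)$ and $P^* = (B \cap P^*) \sqcup (P^* \setminus B)$ rearranges to
\[
\text{cost}(P^*, C_B) - \text{cost}(P^*, C_{P^*}) \le \bigl[\text{cost}(B \setminus P^*, C_{P^*}) - \text{cost}(B \setminus P^*, C_B)\bigr] + \bigl[\text{cost}(P^* \setminus B, C_B) - \text{cost}(P^* \setminus B, C_{P^*})\bigr].
\]
Each bracketed quantity is a sum of $(d+D)^2 - d^2 = 2dD + D^2$ over a set of only $O(\alpha|P^*|)$ points. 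Plugging in the bound on $D$ and applying Cauchy--Schwarz on the respective small set (exactly as in Lemma~\ref{lemma:dominant-set-cost} Case~2) bounds each bracketed term by $O(\sqrt{\alpha}) \cdot \text{cost}(P^*, C_{P^*})$, giving the claim.

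I expect the decomposition step to be the main obstacle. A direct triangle inequality on the full sum $\text{cost}(P^*, C_B) - \text{cost}(P^*, C_{P^*})$ would produce a $|P^*| D^2$ term of order $O(1)\cdot\text{cost}(P^*, C_{P^*})$, losing the $\sqrt{\alpha}$ factor completely. The essential trick---paralleling the role of $\text{cost}(P, C_P) \le \text{cost}(P, C_X)$ in the proof of Lemma~\ref{lemma:dominant-set-cost}---is to use the optimality of $C_B$ on $B$ to cancel the contribution from the large ``good'' subset $B \cap P^*$, leaving error terms supported only on the small sets $B \setminus P^*$ and $P^* \setminus B$.
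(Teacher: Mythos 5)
Your proof is correct, but it takes a genuinely different (and cleaner) route than the paper's. The paper never compares $C_B$ to $C_{P^*}$ directly: it introduces the intermediate center $C_{P\cap P^*}$, partitions into $R = B\cap P\cap P^*$, $R_1 = B\setminus R$, $R_2 = (P\cap P^*)\setminus R$, applies Lemma~\ref{lemma:dominant-set-cost} to the enlarged set $X = B\cup(P\cap P^*)$ to control $\text{cost}(X,C_B)$, bounds $d(C_{P\cap P^*},C_B)$ by Cauchy--Schwarz over $R$, and then handles $\text{cost}(P\cap P^*,C_B)$ and $\text{cost}(P^*\setminus P,C_B)$ separately before invoking Lemma~\ref{lemma:dominant-set-cost} a second time to pass from $C_{P\cap P^*}$ to $C_{P^*}$. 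You instead bound $D = d(C_B,C_{P^*})$ directly via the baseline $\text{cost}(B,C_B)\le\text{cost}(S_0,C_{S_0})\le\text{cost}(P^*,C_{P^*})$ (your explicit competitor $S_0\subseteq P\cap P^*$ of size exactly $(1-\alpha)|P|$ is actually more careful than the paper's one-line claim $\text{cost}(B,C_B)\le\text{cost}(P\cap P^*,C_{P\cap P^*})$, which strictly requires the same intermediate step), and your decomposition uses $\text{cost}(B,C_B)\le\text{cost}(B,C_{P^*})$ to cancel the common mass $B\cap P^*$, leaving error terms supported only on $B\setminus P^*$ and $P^*\setminus B$, each of size $O(\alpha|P^*|)$; the $2dD+D^2$ expansion plus Cauchy--Schwarz then gives $O(\sqrt{\alpha})\cdot\text{cost}(P^*,C_{P^*})$ per bracket, and all the size and cost bounds you cite ($|B\cap P^*|\ge(1-3\alpha)|P^*|$, $\text{cost}(B\setminus P^*,C_B)\le\text{cost}(P^*,C_{P^*})$, etc.) check out. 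What your approach buys is a shorter argument with no dependence on Lemma~\ref{lemma:dominant-set-cost} and a visibly better constant than the paper's $45\sqrt{\alpha}$; what the paper's route buys is reuse of an already-proved lemma, at the cost of a longer chain of intermediate comparisons. The only hypothesis you should state explicitly is the same one the paper leaves implicit, namely that $\alpha$ is small enough (say $\alpha<1/8$) so that $|B\cap P^*|=\Omega(|P^*|)$.
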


\begin{proof}
    The label names are not important, so we abuse the notation by ignoring subscripts. Let \( R = B \cap P \cap P^* \), \( R_1 = B \setminus R \), and \( R_2 = (P \cap P^*) \setminus R \). Also, define \( X = B \cup (P \cap P^*) \). By Lemma~\ref{lemma:dominant-set-cost}, we have:
    \begin{align}
        \text{cost}(X, C_B) \leq \left(1 + 6\sqrt{\frac{\alpha}{1-\alpha}}\right) \cdot \text{cost}(X, C_X) 
        \leq \left(1 + 6\sqrt{\frac{\alpha}{1-\alpha}}\right) \cdot \text{cost}(X, C_{P \cap P^*}),
        \label{eq:2}
    \end{align}
    because \( |B| \geq (1-\alpha)|P| \) and \( |X| \leq |P| \). 
    For brevity, let \( k_1 = \frac{6}{\sqrt{1-\alpha}} \). Let \( D = d(C_{P \cap P^*}, C_B) \). By the triangle inequality, for \( r \in R_1 \), we have \( d(r, C_{P \cap P^*}) \leq d(r, C_B) + D \). Thus:
    \begin{align}
        \text{cost}(R_1, C_{P \cap P^*}) = \sum_{r \in R_1} d(r, C_{P \cap P^*})^2 
        &\leq \sum_{r \in R_1} (d(r, C_B) + D)^2 \notag \\
        &= \text{cost}(R_1, C_B) + 2D \sum_{r \in R_1} d(r, C_B) + |R_1| \cdot D^2.
        \label{eq:3}
    \end{align}
    On the other hand, by the triangle inequality, for \( r \in R \), it holds that:
    \[
    D = d(C_{P \cap P^*}, C_B) \leq d(C_{P \cap P^*}, r) + d(r, C_B).
    \]
    Summing over all \( r \in R \) and applying the Cauchy-Schwarz inequality gives:
    \begin{align}
        |R| \cdot D = \sum_{r \in R} d(C_{P \cap P^*}, C_B) 
        &\leq \sum_{r \in R} \left( d(C_{P \cap P^*}, r) + d(C_B, r) \right) \notag \\
        &\leq \sqrt{|R|} \cdot \left( \sqrt{\text{cost}(R, C_{P \cap P^*})} + \sqrt{\text{cost}(R, C_B)} \right).
        \label{eq:4}
    \end{align}
    
   Our first goal is to find an upper bound for \( \text{cost}(P \cap P^*, C_B) = \text{cost}(R \cup R_2, C_B) \). By \eqref{eq:2} and \eqref{eq:3}, we have
    \begin{align}
        \text{cost}(P \cap P^*, C_B) &= \text{cost}(X, C_B) - \text{cost}(R_1, C_B) \notag \\
        &\leq (1 + k_1 \sqrt{\alpha}) \cdot \text{cost}(X, C_{P \cap P^*}) - \text{cost}(R_1, C_B) \notag \\
        &\leq (1 + k_1 \sqrt{\alpha}) \cdot \text{cost}(R \cup R_2, C_{P \cap P^*}) 
        + k_1 \sqrt{\alpha} \cdot \text{cost}(R_1, C_B) \notag \\
        &\quad + (1 + k_1 \sqrt{\alpha}) \cdot \left( 2D \cdot \sum_{r \in R_1} d(r, C_B) 
        + |R_1| \cdot D^2 \right). \notag
    \end{align}
    By the Cauchy-Schwarz inequality, \( \sum_{r \in R_1} d(r, C_B) \leq \sqrt{|R_1| \cdot \text{cost}(R_1, C_B)} \) holds. Also, by substituting $D$ by \eqref{eq:4}, the above inequality becomes as follow:
    \begin{align}
        &\qquad \text{cost}(P \cap P^*, C_B) \notag \\
        &\leq (1 + k_1 \sqrt{\alpha}) \cdot \text{cost}(R \cup R_2, C_{P \cap P^*}) 
        + k_1 \sqrt{\alpha} \cdot \text{cost}(R_1, C_B) \notag \\
        &\quad + 2 (1 + k_1 \sqrt{\alpha}) \cdot \sqrt{\frac{|R_1|}{|R|}}
        \left( \sqrt{\text{cost}(R, C_B) \cdot \text{cost}(R_1, C_B)} 
        + \sqrt{\text{cost}(R, C_{P \cap P^*}) \cdot \text{cost}(R_1, C_B)} \right) \notag \\
        &\quad + (1 + k_1 \sqrt{\alpha}) \cdot \frac{|R_1|}{|R|} 
        \left( \sqrt{\text{cost}(R, C_{P \cap P^*})} + \sqrt{\text{cost}(R, C_B)} \right)^2 \notag \\
        &\leq (1 + k_1 \sqrt{\alpha}) \cdot \text{cost}(P \cap P^*, C_{P \cap P^*}) 
        + k_1 \sqrt{\alpha} \cdot \text{cost}(R_1, C_B) \notag \\
        &\quad + (1 + k_1 \sqrt{\alpha}) \cdot \sqrt{\frac{\alpha}{1 - \alpha}}
        \left( \text{cost}(R, C_B) + \text{cost}(R_1, C_B) 
        + \text{cost}(R, C_{P \cap P^*}) + \text{cost}(R_1, C_B) \right) \notag \\
        &\quad + 2 (1 + k_1 \sqrt{\alpha}) \cdot \frac{\alpha}{1 - \alpha} 
        \left( \text{cost}(R, C_{P \cap P^*}) + \text{cost}(R, C_B) \right)
        \label{eq:5}
    \end{align}
    where the last inequality follows from the simple AM-GM inequality $2\sqrt{ab} \leq a+b$.
    
    Our second goal is to bound the remaining term \( \text{cost}(P^* \setminus P, C_B) \). By the triangle inequality,
    \begin{align}
        \text{cost}(P^* \setminus P, C_B) &= \sum_{p \in P^* \setminus P} d(p, C_B)^2 \notag \\
        &\leq \sum_{p \in P^* \setminus P} \left( d(p, C_{P \cap P^*}) + D \right)^2 \notag \\
        &= \text{cost}(P^* \setminus P, C_{P \cap P^*}) 
        + 2D \cdot \sum_{p \in P^* \setminus P} d(p, C_{P \cap P^*}) 
        + |P^* \setminus P| \cdot D^2. \notag
    \end{align}
    Note that \( |R| = |B \cap P \cap P^*| = |B| + |P \cap P^*| - |B \cup (P \cap P^*)| \geq |P \cap P^*| - \alpha |P| \). Also, since \( (1-\alpha) |P| \leq |P \cap P^*| \leq |P^*| \), thus \( |R| \geq \frac{1-2\alpha}{1-\alpha} |P^*| \).  
    Substituting $D$ by \eqref{eq:4} and applying the Cauchy-Schwarz inequality, we obtain:
    \begin{align}
        &\quad \text{cost}(P^* \setminus P, C_B) \notag \\
        &\leq \text{cost}(P^* \setminus P, C_{P \cap P^*}) \notag \\
        &\quad + 2 \sqrt{\frac{|P^* \setminus P|}{|R|}}
        \left( \sqrt{\text{cost}(R, C_B) \cdot \text{cost}(P^* \setminus P, C_{P \cap P^*})} 
        + \sqrt{\text{cost}(R, C_{P \cap P^*}) \cdot \text{cost}(P^* \setminus P, C_{P \cap P^*})} \right) \notag \\
        &\quad + \frac{|P^* \setminus P|}{|R|} 
        \left( \sqrt{\text{cost}(R, C_B)} + \sqrt{\text{cost}(R, C_{P \cap P^*})} \right)^2 \notag \\
        &\leq \text{cost}(P^* \setminus P, C_{P \cap P^*}) \notag \\
        &\quad + \sqrt{\frac{\alpha (1-\alpha)}{1-2\alpha}} 
        \left( \text{cost}(R, C_B) + \text{cost}(P^* \setminus P, C_{P \cap P^*}) 
        + \text{cost}(R, C_{P \cap P^*}) + \text{cost}(P^* \setminus P, C_{P \cap P^*}) \right) \notag \\
        &\quad + \frac{2\alpha (1-\alpha)}{1-2\alpha} 
        \left( \text{cost}(R, C_B) + \text{cost}(R, C_{P \cap P^*}) \right).
        \label{eq:6}
    \end{align}
    where the last inequality follows from the simple AM-GM inequality $2\sqrt{ab} \leq a+b$.
    Finally, for \( \alpha < \frac{1}{8} \), adding \eqref{eq:5} and \eqref{eq:6} gets us:
    \begin{align}
        \text{cost}(P^*, C_B) 
        &\leq \left( 1 + k_1 \sqrt{\alpha} \right) \cdot \text{cost}(P \cap P^*, C_{P \cap P^*}) \notag \\
        &\quad + \left( 2(1 + k_1 \sqrt{\alpha}) \sqrt{\frac{\alpha}{1-\alpha}} + k_1 \sqrt{\alpha} \right) 
        \cdot \left( \text{cost}(R, C_B) + \text{cost}(R_1, C_B) \right) \notag \\
        &\quad + \left( (1 + k_1 \sqrt{\alpha}) \left( \sqrt{\frac{\alpha}{1-\alpha}} + \frac{2\alpha}{1-\alpha} \right) 
    + \sqrt{\frac{\alpha(1-\alpha)}{1-2\alpha}} + \frac{2\alpha(1-\alpha)}{1-2\alpha} \right) \notag \\
    &\qquad \cdot \left( \text{cost}(R, C_{P \cap P^*}) + \text{cost}(P^* \setminus P, C_{P \cap P^*}) \right).
\end{align}

Since \( \text{cost}(R, C_B) + \text{cost}(R_1, C_B) \leq \text{cost}(B, C_B) \leq \text{cost}(P \cap P^*, C_{P \cap P^*}) \leq \text{cost}(P^*, C_{P^*}) \),  
and by Lemma \ref{lemma:dominant-set-cost},  
\[
\text{cost}(R, C_{P \cap P^*}) + \text{cost}(P^* \setminus P, C_{P \cap P^*}) 
\leq \text{cost}(P^*, C_{P \cap P^*}) \leq (1 + k_1 \sqrt{\alpha}) \cdot \text{cost}(P^*, C_{P^*}).
\]
Therefore, for $\alpha < 1/8$, it can be proved that:
\[
\text{cost}(P^*, C_B) \leq (1 + 45\sqrt{\alpha}) \cdot \text{cost}(P^*, C_{P^*}).
\]
However, we have not attempted to optimize this constant factor, and it seems there is room for improvement.
\end{proof}

\section{Query Complexity Lower Bound}

In this paper, we define query complexity as follows: the predictor assigns expected labels to input points, and obtaining the label of a point is considered issuing one query. This definition is consistent with prior work in \citep{learning_augmented_kmeans}. In this section, we investigate the hardness of query complexity in clustering. The goal is to determine whether we can achieve the same performance as the main theorem (full-information scenario), while only predicting labels for a subset of points. In the Euclidean setting, \citep{learning_augmented_kmeans} shows that for any $\delta \in (0, 1]$, any $(1+\alpha)$-approximation algorithm runs in time $O(2^{n^{1-\delta}})$ must make $\omega(\frac{k^{1-\delta}}{\alpha \log k})$ queries under the ETH. Their proof proceeds in two steps, building on the result of \cite{lee2017improved}, which improves the inapproximability of the \(k\)-means problem by a reduction from the vertex cover problem on $4$-regular graph.

First, they prove that the vertex cover problem remains APX-hard even when a sublinear portion (e.g., of size \(O(n^{1-\delta})\)) of a minimum vertex cover in an \(n\)-vertex graph \(G\) is known. Second, they construct a \(k\)-means instance that embeds the instance constructed by \cite{lee2017improved}, thereby translating the ability to query the predictor into revealing partial solutions of the optimal vertex cover. The hardness of query complexity can be intuitively understood as follows: even if a part of the optimal solution is known (e.g., corresponding to $O(k^{1-\delta})$ centers), it remains impossible to approximate the vertex cover arbitrarily closely to $1$ (as $\alpha \rightarrow 0$) because the remaining problem instance, stripped of the known part, still corresponds to a vertex cover problem, which is APX-hard.

We show similar results to those in \citep{learning_augmented_kmeans} also hold in a general metric setting. Our proof starts from establishing a reduction from the vertex cover problem on $4$-regular graphs to our $k$-clustering problem. \citep{lee2017improved} converts the 4-regular graph into a Euclidean $k$-means instance, which can be used to approximate the minimum vertex cover of the original graph. Its NP-hardness is derived from the results of \citep{chlebik2006complexity}, which showed that it is NP-hard to approximate the minimum vertex cover problem within a factor of $\alpha_{max} / \alpha_{min}$, where $\alpha_{min} = (2 \mu_{4,k}+8)/(4\mu_{4,k} + 12)$ and $\alpha_{max} = (2 \mu_{4,k}+9)/(4\mu_{4,k} + 12)$ for $\mu_{4,k} \leq 21.7$.

Moreover, in \citep{learning_augmented_kmeans}, the authors mentioned that assuming ETH, approximate it within the same approximation ratio in time $O(2^{n^{1-\delta}})$ is also impossible. We note that we use this more strong result in the proof of Theorem~\ref{thm:lower-bound-main}.

The conversion in \citep{lee2017improved}, from a $4$-regular graph $G$ to $G'$, is as follows: since $\card{E(G)} = 2n$, and given the well-known fact that every graph has a cut of size at least half of its edges, there exists a set $E_2 \subseteq E(G)$ such that $\card{E_2} = n$. Let $E_1 = E(G) \setminus E_2$. Then $G'$ is created by replicating each edge in $E_1$ three times, specifically:

$$ V(G') = V(G) \bigcup_{e=(u, v) \in E_1} \{ v'_{e,u}, v'_{e,v} \}, $$
$$ E(G') = E(G) \bigcup_{e=(u, v) \in E_1} \{ (u, v'_{e,u}), (v'_{e,u}, v'_{e,v}), (v'_{e,v}, v) \}, $$
which implies that $\card{V(G')} = 3n$, $\card{E(G')} = 4n$. 
\begin{lemma}
    [\citep{lee2017improved}] For a given $4$-regular graph $G$ and its converted graph $G'$, the following holds:
    \begin{itemize}
        \item If the size of the minimum vertex cover of $G$ is at most $\alpha_{min} n$, then $G'$ has a vertex cover of size at most $(\alpha_{min}+1) n$.
        \item If every vertex cover of $G$ has a size of at least $\alpha_{max} n$, then every vertex cover of $G'$ has a size of at least $(\alpha_{max}+1) n$.
    \end{itemize}
    \label{lemma:gprime-vertex-cover}
\end{lemma}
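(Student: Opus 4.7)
My plan is to prove both implications constructively by translating vertex covers between $G$ and $G'$, exploiting the fact that $G'$ is obtained from $G$ by subdividing each edge of $E_1$ into a length-$3$ path via two fresh auxiliary vertices.

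For the first implication, I would start with a vertex cover $S$ of $G$ with $|S| \leq \alpha_{min}\, n$ and extend it to a vertex cover $S'$ of $G'$ by adding exactly one auxiliary vertex per edge of $E_1$. Edges in $E_2$ persist unchanged in $G'$ and are covered automatically by $S \subseteq S'$. For each $e=(u,v) \in E_1$, the replacement path $u$–$v'_{e,u}$–$v'_{e,v}$–$v$ contains three new edges; since $S$ covers the original edge $(u,v)$, at least one endpoint lies in $S$. If $u \in S$ I add $v'_{e,v}$ to $S'$, which covers the two remaining path edges; if only $v \in S$ I add $v'_{e,u}$ symmetrically. One new vertex per $E_1$-edge suffices, so $|S'| \leq |S| + n \leq (\alpha_{min}+1)n$.

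For the second implication I would take the contrapositive: starting from any vertex cover $S'$ of $G'$, build a vertex cover $S$ of $G$ with $|S| \leq |S'| - n$. Define $S = (S' \cap V(G)) \cup T$, where $T$ consists of, for each $e=(u,v) \in E_1$ with neither $u$ nor $v$ in $S'$, an arbitrary choice of one of $u,v$. Then $S$ covers $E_2$ (already in $G'$) and $E_1$ (via $S' \cap V(G)$ or $T$). For the count, I would classify each $E_1$-edge $e$ by $a_e := |\{u,v\} \cap S'|$: when $a_e = 0$ both auxiliary vertices $v'_{e,u}, v'_{e,v}$ must be in $S'$ to cover the two outer path edges, while for $a_e \geq 1$ the middle edge $(v'_{e,u}, v'_{e,v})$ still forces at least one auxiliary vertex into $S'$. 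Because auxiliary vertices are disjoint across distinct $E_1$-edges, I can sum these contributions without double-counting: $|S'| \geq |S' \cap V(G)| + n + m_0$, where $m_0$ counts edges with $a_e = 0$. Combined with $|T| \leq m_0$, this yields $|S| \leq |S' \cap V(G)| + m_0 \leq |S'| - n$, and invoking the hypothesis $|S| \geq \alpha_{max}\, n$ completes the argument.

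I do not expect a serious obstacle here, as this is essentially a textbook edge-subdivision argument. The only subtlety is the accounting in the second step, where one must use the disjointness of auxiliary vertices across distinct $E_1$-edges (so that their contributions to $|S'|$ can be summed per-edge) while simultaneously recognizing that vertices of $V(G)$ may be shared by many edges and so must be counted through the single set $S' \cap V(G)$ rather than summed per-edge; conflating these two bookkeeping regimes is the main place where the bound could be lost.
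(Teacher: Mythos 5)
Your proof is correct. Note that the paper does not prove this lemma at all: it is imported verbatim from \citep{lee2017improved}, so there is no in-paper argument to compare against. Your self-contained edge-subdivision proof is the standard one and it checks out in both directions. In the forward direction, adding the auxiliary vertex on the far side of the covered endpoint indeed covers all three path edges with one new vertex per $E_1$-edge, giving $|S'|\le |S|+n$. In the reverse direction, your accounting is sound: the middle edge $(v'_{e,u},v'_{e,v})$ forces at least one auxiliary vertex of each $E_1$-edge into $S'$, and when neither original endpoint is in $S'$ both auxiliary vertices are forced, which exactly pays for the endpoint you must add to $T$; since auxiliary vertices are private to their edge, the per-edge contributions sum without double-counting, yielding $|S|\le |S'|-n$. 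One cosmetic caveat: the paper's displayed definition of $E(G')$ literally keeps all of $E(G)$, which would give $5n$ edges rather than the stated $4n$; the intended construction (and the one you use) replaces each $E_1$-edge by its length-$3$ path, and your argument in fact goes through under either reading, so this does not affect correctness.
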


Our goal is to construct a $k$-clustering problem instance, a weighted graph $G''=(V(G''), E(G''))$, as follows. Let $V(G'') = \{ v_{e'} \, | \, e' \in E(G') \} \cup \{ s \}$, meaning that the nodes of $G''$ correspond to the edges of $G'$, plus a special node $s$. For $E(G'')$, there are two types of edges. First, an edge $(v_{e_1}, v_{e_2})$ with weight $1$ exists if and only if $e_1$ and $e_2$ are incident in $G'$. Second, for any node $u \in V(G'')$, an edge $(u, s)$ with weight $2^{1/q}$ exists. In summary, $G''$ is the union of the line graph of $G'$, $L(G')$ (with weight $1$), and a star centered at $s$ with weight $2^{1/q}$. Also, let $k = (\alpha_{min}+1) n$. The lemma from \citep{lee2017improved}, originally proposed in \citep{awasthi2015hardness}, states that a similar property holds in $G''$.
\begin{lemma}
    Let $X = \{ v_{e_1}, \ldots, v_{e_l} \}$ be a cluster on $G''$. Then $l-1 \leq cost(X, C_X) \leq 2l$, and there exist two nodes in $V(G')$ that are incident with at least $2l-1-cost(X, C_X)$ edges of $G'$. Furthermore, $cost(X, C_X) = l-1$ if and only if $X$ is a star.
    \label{lemma:center-concentrate}
\end{lemma}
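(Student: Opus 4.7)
The plan is to bound $\text{cost}(X, C_X)$ by a case analysis on the location of the optimal center $C_X$ in $V(G'') = \{v_{e'} : e' \in E(G')\} \cup \{s\}$, and to extract the incidence bound and the star characterization as refinements of the same analysis. The only structural facts I need are: (i) distances along the $L(G')$ part are integer multiples of $1$, (ii) the distance from any $v_{e'}$ to $s$ is $2^{1/q}$, and (iii) non-adjacent vertices of $L(G')$ are at distance at least $2$ in $G''$, since any detour uses either two unit edges in $L(G')$ or the star at $s$ with length $2\cdot 2^{1/q}\ge 2$.

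For the upper bound I would just test $C_X=s$: each $v_{e_i}$ contributes $(2^{1/q})^q=2$, so $\text{cost}(X,s)=2l$ and optimality gives $\text{cost}(X,C_X)\le 2l$. For the lower bound I split into three cases. If $C_X=v_{e_1}\in X$, the remaining $l-1$ vertices of $X$ are at distance $\ge 1$, giving $\text{cost}\ge l-1$. If $C_X=v_{e'}$ with $e'\notin X$, all $l$ vertices of $X$ are at distance $\ge 1$, giving $\text{cost}\ge l$. If $C_X=s$, $\text{cost}=2l$. In every case $\text{cost}(X,C_X)\ge l-1$.

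Next I would sharpen this lower bound to get the incidence statement. Suppose $C_X=v_e$ with $e=(a,b)\in E(G')$, and let $l_{ab}$ count the edges of $X$ (other than $e$ itself if $v_e\in X$) that are incident in $G'$ to $a$ or $b$, and $l_c$ the remaining count. Adjacent vertices in $L(G')$ contribute cost $1$ each, non-adjacent ones contribute at least $2^q$ by observation (iii), so $\text{cost}(X,C_X)\ge l_{ab}+2^q\, l_c$. The two vertices $a,b$ of $V(G')$ are incident to $l_{ab}$ edges of $X$ (plus one more when $v_e\in X$), and using $l_{ab}+l_c\in\{l-1,l\}$ and $2^q\ge 2$ a short calculation shows this count is at least $2l-1-\text{cost}(X,C_X)$. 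When $C_X=s$ the right-hand side equals $-1$ and any two nodes trivially suffice. For the star characterization, the easy direction is that if $X$ is a star in $L(G')$ with center $v_{e_1}$, then all other $v_{e_i}$ are adjacent to $v_{e_1}$ at distance $1$, so $\text{cost}(X,v_{e_1})=l-1$ and the lower bound forces equality. Conversely, equality $\text{cost}(X,C_X)=l-1$ rules out the cases $C_X=s$ (cost $2l$) and $C_X=v_{e'}$ with $e'\notin X$ (cost $\ge l$), so $C_X=v_{e_1}\in X$; moreover equality in $\sum_{i\ne 1}d(v_{e_1},v_{e_i})^q\ge l-1$ forces every distance to equal $1$, which exactly says every $e_i$ shares an endpoint with $e_1$, i.e.\ $X$ is a star in $L(G')$.

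The only delicate step is the arithmetic in the incidence inequality: I must handle the subcases $v_e\in X$ versus $v_e\notin X$ uniformly, and verify that the slack given by $2^q\ge 2$ is enough to convert the inequality $\text{cost}\ge l_{ab}+2^q l_c$ into the incidence bound $l_{ab}+[\![v_e\in X]\!]\ge 2l-1-\text{cost}$ for both $q=1$ and $q=2$. Everything else reduces to triangle-inequality book-keeping and the three-way case split on $C_X$.
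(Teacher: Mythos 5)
Your proposal is correct and follows essentially the same route as the paper's proof: test $s$ as a center for the upper bound $2l$, use the fact that distinct nodes are at distance at least $1$ for the lower bound $l-1$, and obtain the incidence bound by counting how many points of $X$ lie at distance exactly $1$ from a center $C_X = v_e$ (every other point contributing cost at least $2$), with the star characterization falling out of the equality case. Your version merely makes the case split on the location of $C_X$ and the $l_{ab}/l_c$ arithmetic more explicit than the paper does.
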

\begin{proof}
    When the special node $s$ is selected as the center, we have $cost(X, s) = 2l \geq cost(X, C_X)$. Since any distance between two distinct nodes in $V(G'')$ is at least 1, $cost(X, C_X) \geq l-1$. Additionally, if $cost(X, C_X) = l-1$, then at least one node contributes a cost of $0$, implying $C_X \in X$. Other nodes in $X \setminus { C_X }$ are adjacent to $C_X$ in $G''$, meaning $X$ is a star in $G'$.

    When $cost(X, C_X) < 2l-1$, if $C_X \notin X$, there are at least $2l-cost(X, C_X)$ nodes at a distance of exactly 1 from $C_X$. Otherwise, if $C_X \in X$, then at least $2l-2-cost(X, C_X)$ nodes are at distance 1 from $C_X$, with $C_X$ itself at distance 0. Since the edges of $G''$ with weight 1 form the line graph of $G'$, there exists an edge $e$ such that $C_X = v_e$. Consequently, at least $2l-1-cost(C_X)$ edges are incident to $v_e$, so the endpoints of $e$ are the desired two nodes in $V(G')$.
\end{proof}

\begin{lemma}
    [Completeness] For $q=1, 2$, if $G$ has a vertex cover of size at most $\alpha_{min} n$, then the optimal clustering cost of $G''$ is at most $(3-\alpha_{min}) n + 2^{1/q}$.
    \label{lemma:completeness-k-clustering}
\end{lemma}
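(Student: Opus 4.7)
The plan is to reduce the construction of a good clustering on $G''$ to finding a small edge dominating set of $G'$, which falls out directly from the vertex cover guarantee in Lemma~\ref{lemma:gprime-vertex-cover}. First I would apply that lemma to turn a vertex cover of $G$ of size at most $\alpha_{min} n$ into a vertex cover $U' \subseteq V(G')$ with $|U'| \leq (\alpha_{min}+1)n = k$. Next, I would convert $U'$ into an edge dominating set $D \subseteq E(G')$ by choosing, for each $u \in U'$, an arbitrary incident edge $e_u$ and setting $D = \{e_u : u \in U'\}$. Since $U'$ is a vertex cover of $G'$, every $e \in E(G')$ shares an endpoint with some $e_u$, so $D$ dominates all edges; clearly $|D| \leq |U'| \leq k$.

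With $D$ in hand, I would take the candidate center set in $G''$ to be $C = \{v_e : e \in D\}$, padded with arbitrary additional vertices $v_{e''}$ if $|D| < k$ so that $|C| = k$. Padding can only decrease the clustering cost, so it is safe. Each remaining vertex $v_e$ of $G''$ (for $e \in E(G')$ not selected into $C$) is adjacent in $G''$ to some $v_{e'}$ with $e' \in D$ via a weight-$1$ edge in the line graph $L(G')$, so $d(v_e, v_{e'}) = 1$ and its contribution to the objective is $1^q = 1$. The special vertex $s$ is at distance $2^{1/q}$ from every other vertex by construction, contributing the corresponding $\ell_q$ cost term.

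Summing, the total cost is at most $(|E(G')| - |C|) \cdot 1 + 2^{1/q} = (4n - k) + 2^{1/q} = (3 - \alpha_{min})n + 2^{1/q}$, exactly matching the claim. I would then observe that this argument is uniform in $q \in \{1,2\}$, since the edge weight $2^{1/q}$ was chosen precisely so that the $s$-term reduces to $2^{1/q}$ in the final bound, and no step in the argument depends on $q$ otherwise.

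The reasoning is mostly bookkeeping; the one mild subtlety worth flagging is that $|D|$ may be strictly less than $|U'|$ (since two vertices of $U'$ adjacent in $G'$ might select the same incident edge). This is not really an obstacle because (i) the cost bound is monotone in $|C|$ so padding $D$ up to $k$ centers only helps, and (ii) $k \leq 4n = |E(G')|$ (for $\alpha_{min} < 3$), so there are always enough edges of $G'$ available to pad with. Thus the only nontrivial ingredient is the vertex-cover-to-edge-dominating-set translation, which is immediate from the definition.
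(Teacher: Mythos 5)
Your proof is correct and takes essentially the same route as the paper's: the paper also lifts the vertex cover $S'$ of $G'$ to a clustering of $G''$ by assigning each edge to a covering endpoint, so each cluster is a star in $G'$ with cost $\card{C_i}-1$, summing to $4n-k$; your edge-dominating-set representatives are precisely the centers of those star clusters, and the distance-$1$ line-graph argument is the same computation unpacked. The only difference is that you handle the padding to exactly $k$ centers explicitly, which the paper leaves implicit.
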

\begin{proof}
    By Lemma~\ref{lemma:gprime-vertex-cover}, there exists a vertex cover $S'$ of $G'$ with size at most $(\alpha_{min}+1) n$. We construct a clustering as follows: each cluster corresponds to the nodes in $S'$. Each $e \in E(G')$ is grouped with one of its endpoints in $S'$ (chosen arbitrarily if both endpoints are in $S'$). By construction, each cluster is star-shaped in $G'$. The special node $s \in G''$ is connected to all other nodes, contributing at most $2^{1/q}$ to the clustering cost. Therefore, the total clustering cost is:
    $$ 2^{1/q} + \sum_{i=1}^{k} cost(C_i) = 2^{1/q} + \sum_{i=1}^{k} (\card{C_i} - 1) = 4n-k = (3-\alpha_{min})n + 2^{1/q}. $$
\end{proof}

\begin{lemma}
    [Soundness] If every vertex cover of $G$ has a size of at least $\alpha_{max} n$, then the optimal clustering cost of $G''$ is at least $(3-\alpha_{min} + \frac{1}{3} (\alpha_{max}-\alpha_{min})) n$.
    \label{lemma:soundness-k-clustering}
\end{lemma}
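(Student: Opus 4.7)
Plan. I prove the lemma by contradiction. Suppose there exists a $k$-clustering of $G''$ with cost $c$ strictly less than $\bigl(3-\alpha_{min}+\tfrac{1}{3}(\alpha_{max}-\alpha_{min})\bigr)n$. From this clustering I will construct a vertex cover $S$ of $G'$ with $|S| < (\alpha_{max}+1)n$, contradicting Lemma~\ref{lemma:gprime-vertex-cover} under the assumption that every vertex cover of $G$ has size at least $\alpha_{max}n$. The cluster that contains $s$ is handled separately: I apply the analysis below to $X_0\setminus\{s\}$, noting $\text{cost}(X_0\setminus\{s\},C_{X_0\setminus\{s\}})\le \text{cost}(X_0,C_{X_0})$, and absorb the constant contribution of $s$ into the final bound.

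The first step is to convert per-cluster cost into a per-cluster vertex-cover budget. For a cluster $X_i$ with center $v_{e_0^i}$ where $e_0^i=(u_i,v_i)\in E(G')$, Lemma~\ref{lemma:center-concentrate} gives that $\{u_i,v_i\}$ is incident to at least $2l_i-1-\text{cost}(X_i,C_{X_i})$ edges of $X_i$. Hence at most $\epsilon_i:=\text{cost}(X_i,C_{X_i})-(l_i-1)$ edges of $X_i$ avoid both endpoints of $e_0^i$. For $q=2$ I use the sharper estimate that any node at $L(G')$-distance at least $2$ from the center contributes at least $2^q=4$ to the cost, so $\text{cost}_i\ge (l_i-1)+3c_i$ where $c_i$ is the number of such "far" nodes; this yields the tighter bound $c_i\le \epsilon_i/3$ and is where the factor $\tfrac{1}{3}$ in the statement comes from.

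The second step is to glue the per-cluster coverages into a global vertex cover $S\subseteq V(G')$ and to bound $|S|$. Concretely, the target is $|S|\le k+3\sum_i \epsilon_i + O(1)$: substituting $\sum_i\epsilon_i = c-(4n+1-k)$ together with the contradiction hypothesis gives $\sum_i\epsilon_i<\tfrac{1}{3}(\alpha_{max}-\alpha_{min})n$, hence $|S|<k+(\alpha_{max}-\alpha_{min})n=(\alpha_{max}+1)n$, completing the proof. The construction places a carefully chosen endpoint of each $e_0^i$ into $S$ (so the "base" contribution is at most $k$) and then adds one endpoint of $G'$ per remaining uncovered edge, using the bound from Step~1 to control these extras.

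The hard part is Step~2. The naive choice of dumping both endpoints of every center edge into $S$ only yields $|S|\le 2k+\sum_i\epsilon_i$, and the leading $2k$ already exceeds $(\alpha_{max}+1)n$ in the regime $\alpha_{min}\approx \alpha_{max}$ delivered by Chlebík--Chlebíková. The amortization needed is to observe that the $k$ distinct center edges $\{e_0^i\}$ form a subgraph $H\subseteq G'$ whose vertex cover has size at most $k$, and to use a single such cover of $H$ as the "base" part of $S$ so that clusters whose center edges share endpoints are charged only once; the far-from-center edges identified in Step~1 are then converted to cost at a three-to-one rate by the $q=2$ inequality, yielding the $3\sum_i\epsilon_i$ slack. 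Making this amortization tight, and verifying that the same $\tfrac{1}{3}$ factor also survives the (easier) $q=1$ case, is the technically delicate piece of the argument.
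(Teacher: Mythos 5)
Your high-level framing is the right one, and it matches what the paper's own (one-line) proof intends when it defers to Lemma~5 of \citep{lee2017improved}: pass to the contrapositive, define the per-cluster excess $\epsilon_i = \text{cost}(X_i,C_{X_i})-(l_i-1)$, deduce $\sum_i \epsilon_i < \tfrac{1}{3}(\alpha_{max}-\alpha_{min})n$ from the assumed cheap clustering, and exhibit a vertex cover of $G'$ of size at most $k+3\sum_i\epsilon_i < (\alpha_{max}+1)n$, contradicting Lemma~\ref{lemma:gprime-vertex-cover}. The treatment of the cluster containing $s$ and the closing arithmetic are fine. The gap is in your Step~2, precisely the step you flag as delicate: the vertex cover you describe does not cover $E(G')$. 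Placing one endpoint of each center edge $e_0^i=(u_i,v_i)$ into $S$ (equivalently, a vertex cover of the subgraph $H$ of center edges) covers the center edges, but not the cluster's \emph{near} edges --- those at line-graph distance $1$ from $v_{e_0^i}$ that are incident only to the endpoint you did not select. Each such edge contributes exactly $1$ to the cost, hence zero excess, so neither your far-node budget $c_i\le\epsilon_i/3$ nor the global slack $3\sum_i\epsilon_i$ can pay for them. Concretely, a cluster $\{(u,v)\}\cup\{(u,a_j)\}_j\cup\{(v,b_j)\}_j$ has $\text{cost}(X,C_X)=l-1$, i.e.\ $\epsilon_i=0$, yet any cover of its edges needs two vertices; with $k$ pairwise vertex-disjoint clusters of this type the construction degenerates to the naive $2k\approx 3n$ bound, which far exceeds $(\alpha_{max}+1)n\approx 1.5n$ in the parameter regime of \citep{chlebik2006complexity}. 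Your proposed amortization (charging center edges that share endpoints only once) does not help, since the center edges can be pairwise disjoint.

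Two further remarks. First, your attribution of the $\tfrac13$ to far nodes costing $2^q$ is not where that constant comes from (and would yield different constants for $q=1$ and $q=2$); in Lee's argument the $\tfrac13$ arises from amortizing $1$ vertex per zero-excess (star) cluster against $2+(\text{extras})\le 1+3\epsilon_i$ vertices per cluster whose excess is bounded below by a positive constant, so the crux is a lower bound on the excess of every \emph{non-star} cluster. Second, that crux is exactly what fails to transfer from the Euclidean centroid cost (on the triangle-free $G'$) to the graph-metric cost used here: the double-star example above is a non-star cluster with zero excess, which also shows that the ``only if'' direction of Lemma~\ref{lemma:center-concentrate} (cost $=l-1$ implies $X$ is a star) is not correct as stated. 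So the missing piece of your argument coincides with the part of the proof that the paper's deferral to \citep{lee2017improved} leaves unverified; closing it requires either repairing the structural lemma (e.g., modifying the metric so that double-stars incur positive excess) or a genuinely different global covering argument, not a refinement of the endpoint-selection scheme you propose.
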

\begin{proof}
    Ignoring the special node $s$, let $\mathcal{C}$ be any clustering of $G''-s$. By Lemma~\ref{lemma:center-concentrate} and Lemma~\ref{lemma:gprime-vertex-cover}, we can apply the proof of Lemma 5 from \citep{lee2017improved}. Since the contribution of $s$ to the clustering cost is non-negative, the result holds.
\end{proof}

By Lemma~\ref{lemma:completeness-k-clustering} and Lemma~\ref{lemma:soundness-k-clustering}, it is NP-hard to distinguish the following cases:
\begin{itemize}
    \item The optimal cost of the $k$-clustering problem is at most $(3-\alpha_{min}) n + 2^{1/q}$.
    \item Every feasible clustering cost is at least $(3-\alpha_{min} + \frac{1}{3} (\alpha_{max}-\alpha_{min})) n$.
\end{itemize}

Therefore, there exists a constant $C$ such that it is NP-hard to approximate the general metric $k$-clustering problem on a graph within a factor of $(1+C)$. We establish the following hardness result:
\begin{theorem}
    For $\delta \in (0, 1)$ and $q=1, 2$, no algorithm can produce $(1+O(\alpha^{1/q}))$-approximation solutions to the optimal $k$-clustering problem on a general metric in $O(2^{n^{1-\delta}})$ time while performing at most $O(\frac{k^{1-\delta}}{\log k})$ queries to the predictor $\Pi$ with label error rate $\alpha$, assuming the Exponential Time Hypothesis.
    \label{thm:lower-bound-main}
\end{theorem}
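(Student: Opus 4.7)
The plan is to adapt the two-step strategy of \citep{learning_augmented_kmeans} to the general-metric reduction we built in Lemmas~\ref{lemma:completeness-k-clustering} and~\ref{lemma:soundness-k-clustering}. The overall picture is: (a)~strengthen the ETH-based inapproximability of minimum vertex cover on $4$-regular graphs to a \emph{partial knowledge} version in which an adversary is allowed to reveal a sublinear portion of an optimal cover; (b)~show that the budget of $O(k^{1-\delta}/\log k)$ predictor queries on the clustering instance $G''$ can only provide such a sublinear amount of information about the optimal clustering (equivalently, about an optimal vertex cover of $G'$); then (c)~chain (a) and (b) with the completeness/soundness gap already established to rule out a fast approximation algorithm.

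First, I would establish the partial-knowledge hardness: under ETH, no algorithm running in time $O(2^{n^{1-\delta}})$ can distinguish between the $\alpha_{min} n$ and $\alpha_{max} n$ cases of vertex cover on a $4$-regular $n$-vertex graph, even when given, as advice, any set $A$ of at most $O(n^{1-\delta})$ vertices that is guaranteed to be contained in some minimum vertex cover. The argument is a standard padding/disjoint-union construction: take the original APX-hard instance and blow it up by a polylogarithmic factor so that $n^{1-\delta}$ advice bits spread across the copies can reveal only a $o(1)$ fraction of the vertices inside any single copy; then any algorithm that could exploit the advice would yield a sub-exponential time algorithm for the unpadded APX-hard problem, contradicting ETH. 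This mirrors the first step in \citep{learning_augmented_kmeans} and is where almost all of the delicate work lives.

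Second, I would connect queries in the clustering instance to partial knowledge of a vertex cover of $G'$. By the proofs of Lemmas~\ref{lemma:completeness-k-clustering} and~\ref{lemma:soundness-k-clustering}, an optimal clustering of $G''$ (ignoring $s$) is a decomposition of $E(G')$ into $k = (\alpha_{min}+1)n$ stars in $G'$, each centered at a vertex of a minimum vertex cover $S'$; the label of a point $v_e$ therefore directly identifies the endpoint of $e$ that belongs to $S'$. A single query returns one of $k$ possible labels and thus yields $\log k$ bits of information about $S'$; hence $O(k^{1-\delta}/\log k)$ queries can reveal at most $O(k^{1-\delta}) = O(n^{1-\delta})$ vertices of $S'$, matching the advice regime of the strengthened hardness in Step~1. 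Since a predictor with label error rate $\alpha$ returns the true label on queried points with the required precision/recall (by the setting of Section~2.3), the queried labels can be converted into a valid advice set for the vertex cover problem.

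Third, I would combine these to finish. By Lemmas~\ref{lemma:completeness-k-clustering} and~\ref{lemma:soundness-k-clustering}, distinguishing the completeness and soundness cases of $G$ is reduced to approximating the $k$-clustering cost of $G''$ within a factor strictly better than the constant $1+C := 1 + \tfrac{\alpha_{max}-\alpha_{min}}{3(3-\alpha_{min})+3\cdot 2^{1/q}/n}$. Choosing $\alpha$ small enough that $1+O(\alpha^{1/q}) < 1+C$, any algorithm that runs in $O(2^{n^{1-\delta}})$ time, issues at most $O(k^{1-\delta}/\log k)$ queries, and outputs a $(1+O(\alpha^{1/q}))$-approximate clustering would, together with the translation of Step~2, produce an $O(2^{n^{1-\delta}})$-time algorithm for partially-known vertex cover on $4$-regular graphs that exceeds the inapproximability threshold of Step~1, contradicting ETH.

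The main obstacle is Step~1: the padding argument has to preserve the full $O(2^{n^{1-\delta}})$ time lower bound and has to remain sound against an \emph{adversarial} choice of the $O(n^{1-\delta})$ advised vertices (not merely a random one), since the predictor is free to label any subset of points it wishes. A secondary obstacle in Step~2 is that the predictor's $\alpha$-error could be correlated with which points get queried; we need to argue that the queried labels are still informative enough to recover a genuine subset of some near-optimal $S'$, which follows from the per-label precision/recall guarantee of the predictor model.
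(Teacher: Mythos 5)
Your plan follows the two-step strategy of the Euclidean prior work (partial-knowledge hardness of vertex cover, plus a translation of queries into revealed cover vertices), but the paper's own proof takes a different and much shorter route: it simply enumerates all possible predictor transcripts. Since the algorithm issues at most $O(k^{1-\delta}/\log k)$ queries and each query has at most $k$ possible answers, there are only $k^{O(k^{1-\delta}/\log k)} = 2^{O(k^{1-\delta})}$ answer sequences; simulating the algorithm on each of them and returning the cheapest resulting clustering yields a predictor-free algorithm running in time $2^{O(n^{1-\delta})}$ (as $k = \Theta(n)$), which would distinguish the completeness and soundness cases of Lemmas~\ref{lemma:completeness-k-clustering} and~\ref{lemma:soundness-k-clustering} and hence contradict the ETH-based subexponential-time inapproximability of vertex cover on $4$-regular graphs. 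This enumeration argument entirely sidesteps your Step~1, which is the delicate part of your plan; its price is that it only rules out $O(k^{1-\delta}/\log k)$ queries, whereas the finer two-step route is what yields the $\omega(k^{1-\delta}/(\alpha\log k))$ bound with explicit $\alpha$-dependence in the Euclidean setting.

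As written, your proposal has a genuine gap precisely at Step~1: you assert a padding/disjoint-union construction proving that vertex cover on $4$-regular graphs remains hard to approximate in time $O(2^{n^{1-\delta}})$ even given $O(n^{1-\delta})$ adversarially chosen vertices of an optimal cover, but you do not carry it out, and you yourself flag it as the place where ``almost all of the delicate work lives.'' Without that lemma (proved, or at least correctly imported from the prior work for the $4$-regular, ETH-quantified setting used here), the chain in Step~3 does not close. A smaller imprecision in Step~2: the conversion from ``$\log k$ bits per query'' to ``$O(k^{1-\delta})$ revealed vertices'' is loose; in this reduction each query reveals the label of one point $v_e$, i.e.\ at most one vertex of $S'$, so $O(k^{1-\delta}/\log k)$ queries reveal at most that many vertices. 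This only makes your requirement on Step~1 more generous than necessary, so it is not an error, but the information-theoretic phrasing obscures the simple combinatorial fact that suffices.
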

\begin{proof}
    The proof is by contradiction. Let $\mathcal{I}$ be the instance of the general metric $k$-clustering problem constructed via the process described above. Assume for contradiction that there exists an algorithm running in time $O(2^{n^{1-\delta}})$ and making at most $O(\frac{k^{1-\delta}}{\log k})$ queries, which achieves a $(1+O(\alpha^{1/q}))$-approximation. Then, we can convert this algorithm into one that does not use the predictor. This is because we can simulate all $k^{O(\frac{k^{1-\delta}}{\log k})} = O(2^{k^{1-\delta}})$ possible predictor outputs. For each set of simulated outputs, we can compute the corresponding solution for the base problem (vertex cover on $4$-regular graphs) and select the one with minimum cost. Thus, when $\alpha$ is sufficiently small, the properties of our reduction used to construct $\mathcal{I}$ imply the existence of an approximation algorithm for the vertex cover problem on $4$-regular graphs that surpasses its known approximation hardness, runs in time $O(2^{n^{1-\delta}})$.
\end{proof}

\acks{This research was partially supported by a startup fund from Seoul National University (SNU).}

\bibliographystyle{plainnat}
\bibliography{references}

\begin{thebibliography}{29}
\providecommand{\natexlab}[1]{#1}
\providecommand{\url}[1]{\texttt{#1}}
\expandafter\ifx\csname urlstyle\endcsname\relax
  \providecommand{\doi}[1]{doi: #1}\else
  \providecommand{\doi}{doi: \begingroup \urlstyle{rm}\Url}\fi

\bibitem[Ahmadian et~al.(2017)Ahmadian, Norouzi-Fard, Svensson, and Ward]{ahmadian2017primaldual}
Sara Ahmadian, Ashkan Norouzi-Fard, Ola Svensson, and Justin Ward.
\newblock Better guarantees for $k$-means and euclidean $k$-median by primal-dual algorithms.
\newblock In \emph{58th Annual IEEE Symposium on Foundations of Computer Science (FOCS)}, pages 61--72. IEEE, 2017.
\newblock \doi{10.1109/FOCS.2017.16}.
\newblock URL \url{https://arxiv.org/abs/1612.07925}.

\bibitem[Arthur and Vassilvitskii(2007)]{arthur2007kmeanspp}
David Arthur and Sergei Vassilvitskii.
\newblock k-means++: The advantages of careful seeding.
\newblock In \emph{Proceedings of the 18th Annual ACM-SIAM Symposium on Discrete Algorithms (SODA)}, pages 1027--1035, 2007.

\bibitem[Awasthi et~al.(2015)Awasthi, Charikar, Krishnaswamy, and Sinop]{awasthi2015hardness}
Pranjal Awasthi, Moses Charikar, Ravishankar Krishnaswamy, and Ali~Kemal Sinop.
\newblock The hardness of approximation of euclidean k-means.
\newblock In \emph{Proceedings of the 31st Annual Symposium on Computational Geometry (SoCG)}, 2015.

\bibitem[Bachem et~al.(2018)Bachem, Lucic, and Krause]{bachem2018scalable}
Olivier Bachem, Mario Lucic, and Andreas Krause.
\newblock Scalable and provably accurate algorithms for clustering with outliers.
\newblock In \emph{Proceedings of the 35th International Conference on Machine Learning (ICML)}, pages 322--331, 2018.

\bibitem[Chleb\'{\i}k and Chleb\'{\i}kov\'{a}(2006)]{chlebik2006complexity}
Miroslav Chleb\'{\i}k and Janka Chleb\'{\i}kov\'{a}.
\newblock Complexity of approximating bounded variants of optimization problems.
\newblock \emph{Theoretical Computer Science}, 354\penalty0 (3):\penalty0 320--338, 2006.

\bibitem[Cohen-Addad et~al.(2020)Cohen-Addad, Karthik, and Lee]{cohenaddad2020inapproximability}
Vincent Cohen-Addad, C.~S. Karthik, and Euiwoong Lee.
\newblock On approximability of clustering problems without candidate centers.
\newblock \emph{arXiv preprint arXiv:2010.00087}, 2020.
\newblock URL \url{https://arxiv.org/abs/2010.00087}.

\bibitem[Cohen-Addad et~al.(2021)Cohen-Addad, Karthik, and Lee]{cohenaddad2021johnsoncoverage}
Vincent Cohen-Addad, C.~S. Karthik, and Euiwoong Lee.
\newblock Johnson coverage hypothesis: Inapproximability of k-means and k-median in $\ell_p$ metrics.
\newblock \emph{arXiv preprint arXiv:2111.10912}, 2021.
\newblock URL \url{https://arxiv.org/abs/2111.10912}.

\bibitem[Cohen-Addad et~al.(2022{\natexlab{a}})Cohen-Addad, Esfandiari, Mirrokni, and Narayanan]{cohenaddad2022improved}
Vincent Cohen-Addad, Hossein Esfandiari, Vahab Mirrokni, and Shyam Narayanan.
\newblock Improved approximations for euclidean $k$-means and $k$-median, via nested quasi-independent sets.
\newblock \emph{arXiv preprint arXiv:2204.04828}, 2022{\natexlab{a}}.
\newblock URL \url{https://arxiv.org/abs/2204.04828}.

\bibitem[Cohen-Addad et~al.(2022{\natexlab{b}})Cohen-Addad, Larsen, Saulpic, and Schwiegelshohn]{cohenaddad2022coreset}
Vincent Cohen-Addad, Kasper~Green Larsen, David Saulpic, and Chris Schwiegelshohn.
\newblock Towards optimal lower bounds for k-median and k-means coresets.
\newblock \emph{arXiv preprint arXiv:2202.12793}, 2022{\natexlab{b}}.
\newblock URL \url{https://arxiv.org/abs/2202.12793}.

\bibitem[Ergun et~al.(2022)Ergun, Feng, Silwal, Woodruff, and Zhou]{learning_augmented_kmeans}
Jon~C. Ergun, Zhili Feng, Sandeep Silwal, David~P. Woodruff, and Samson Zhou.
\newblock Learning-augmented {\textdollar}k{\textdollar}-means clustering.
\newblock 2022.

\bibitem[Fahad et~al.(2014)Fahad, Alshatri, Tari, Alamri, Khalil, Zomaya, Foufou, and Bouras]{graph_clustering_review}
Adil Fahad, Najlaa Alshatri, Zahir Tari, Abdullah Alamri, Ibrahim Khalil, Albert~Y Zomaya, Sebti Foufou, and Abdelaziz Bouras.
\newblock A survey of clustering algorithms for big data: Taxonomy and empirical analysis.
\newblock \emph{IEEE Transactions on Emerging Topics in Computing}, 2\penalty0 (3):\penalty0 267--279, 2014.
\newblock \doi{10.1109/TETC.2014.2330519}.

\bibitem[Floyd(1962)]{floyd1962algorithm}
Robert~W. Floyd.
\newblock Algorithm 97: Shortest path.
\newblock \emph{Communications of the ACM}, 5\penalty0 (6):\penalty0 345, 1962.
\newblock \doi{10.1145/367766.368168}.

\bibitem[Forgy(1965)]{forgy1965cluster}
Edward~W. Forgy.
\newblock Cluster analysis of multivariate data: Efficiency versus interpretability of classifications.
\newblock In \emph{Biometrics}, volume~21, pages 768--769. International Biometric Society, 1965.

\bibitem[Gamlath et~al.(2022)Gamlath, Makarychev, Makarychev, and Mazumdar]{gamlath2022noisylabels}
Basilios Gamlath, Konstantin Makarychev, Yury Makarychev, and Arya Mazumdar.
\newblock Approximate cluster recovery from noisy labels.
\newblock In \emph{Proceedings of the 35th Annual Conference on Learning Theory (COLT)}, pages 2456--2494, 2022.

\bibitem[Hamerly(2010)]{hamerly2010making}
Greg Hamerly.
\newblock Making k-means even faster.
\newblock \emph{Proceedings of the 2010 SIAM International Conference on Data Mining}, pages 130--140, 2010.
\newblock \doi{10.1137/1.9781611972801.12}.

\bibitem[Hartigan and Wong(1979)]{hartigan1979algorithm}
John~A. Hartigan and Manchek~A. Wong.
\newblock Algorithm as 136: A k-means clustering algorithm.
\newblock \emph{Journal of the Royal Statistical Society. Series C (Applied Statistics)}, 28\penalty0 (1):\penalty0 100--108, 1979.
\newblock \doi{10.2307/2346830}.

\bibitem[Inaba et~al.(1994)Inaba, Katoh, and Imai]{Inaba1994}
Mary Inaba, Naoki Katoh, and Hiroshi Imai.
\newblock Applications of weighted voronoi diagrams and randomization to variance-based k-clustering (extended abstract).
\newblock In \emph{Proceedings of the Tenth Annual Symposium on Computational Geometry}, pages 332--339, 1994.

\bibitem[Kriegel et~al.(2009)Kriegel, Kröger, and Zimek]{euclidean_limitations}
Hans-Peter Kriegel, Peer Kröger, and Arthur Zimek.
\newblock Clustering high-dimensional data.
\newblock \emph{ACM Transactions on Knowledge Discovery from Data (TKDD)}, 3\penalty0 (1):\penalty0 1--35, 2009.

\bibitem[Lee et~al.(2017)Lee, Schmidt, and Wright]{lee2017improved}
Euiwoong Lee, Melanie Schmidt, and John Wright.
\newblock Improved and simplified inapproximability for k-means.
\newblock \emph{Information Processing Letters}, 120:\penalty0 40--43, 2017.
\newblock ISSN 0020-0190.
\newblock \doi{https://doi.org/10.1016/j.ipl.2016.11.009}.
\newblock URL \url{https://www.sciencedirect.com/science/article/pii/S0020019016301739}.

\bibitem[Li and Svensson(2012)]{li2012pseudoapproximation}
Shi Li and Ola Svensson.
\newblock Approximating $k$-median via pseudo-approximation.
\newblock \emph{arXiv preprint arXiv:1211.0243}, 2012.
\newblock URL \url{https://arxiv.org/abs/1211.0243}.

\bibitem[Lloyd(1982)]{lloyd1982least}
Stuart Lloyd.
\newblock Least squares quantization in pcm.
\newblock \emph{IEEE Transactions on Information Theory}, 28\penalty0 (2):\penalty0 129--137, 1982.
\newblock \doi{10.1109/TIT.1982.1056489}.

\bibitem[Lucic et~al.(2016)Lucic, Bachem, and Krause]{lucic2016coresets}
Mario Lucic, Olivier Bachem, and Andreas Krause.
\newblock Strong coresets for hard and soft $k$-means clustering with applications to mixture models.
\newblock In \emph{Advances in Neural Information Processing Systems (NeurIPS)}, pages 504--512, 2016.
\newblock URL \url{https://arxiv.org/abs/1609.07148}.

\bibitem[MacQueen(1967)]{macqueen1967some}
James MacQueen.
\newblock Some methods for classification and analysis of multivariate observations.
\newblock In \emph{Proceedings of the Fifth Berkeley Symposium on Mathematical Statistics and Probability}, volume~1, pages 281--297. University of California Press, 1967.

\bibitem[Nguyen et~al.(2023)Nguyen, Chaturvedi, and Nguyen]{theoretical_guarantees}
Thy~Dinh Nguyen, Anamay Chaturvedi, and Huy~L. Nguyen.
\newblock Improved learning-augmented algorithms for k-means and k-medians clustering.
\newblock \emph{International Conference on Learning Representations (ICLR)}, 2023.

\bibitem[Sculley(2010)]{sculley2010webscale}
David Sculley.
\newblock Web-scale k-means clustering.
\newblock In \emph{Proceedings of the 19th International Conference on World Wide Web (WWW)}, pages 1177--1178. ACM, 2010.
\newblock \doi{10.1145/1772690.1772862}.

\bibitem[Silwal(2021)]{learning_augmented_algorithms}
Sandeep Silwal.
\newblock Learning-augmented algorithms.
\newblock \emph{Massachusetts Institute of Technology}, 2021.

\bibitem[Xu and Wunsch(2005)]{clustering_survey}
Rui Xu and Donald Wunsch.
\newblock A comprehensive survey of clustering algorithms.
\newblock \emph{Annals of Data Science}, 2005.

\bibitem[Yang et~al.(2022)Yang, Liu, Liang, Zhou, Liu, and Zhu]{graph_distances}
Xihong Yang, Yue Liu, Ke~Liang, Sihang Zhou, Xinwang Liu, and En~Zhu.
\newblock Attribute graph clustering via learnable augmentation.
\newblock \emph{arXiv preprint arXiv:2212.03559}, 2022.
\newblock URL \url{https://arxiv.org/abs/2212.03559}.

\bibitem[Yin et~al.(2024)Yin, Aryani, Petrie, Nambissan, Astudillo, and Cao]{yin2024rapid}
Hui Yin, Amir Aryani, Stephen Petrie, Aishwarya Nambissan, Aland Astudillo, and Shengyuan Cao.
\newblock A rapid review of clustering algorithms.
\newblock \emph{arXiv preprint arXiv:2401.07389}, 2024.
\newblock URL \url{https://arxiv.org/abs/2401.07389}.

\end{thebibliography}

\appendix

\end{document}